\newtheorem{theorem}{Theorem}
\newtheorem{assumption}{Assumption}
\title{\LARGE \bf
Robust Dynamic Control Barrier Function Based Trajectory Planning for Mobile Manipulator
}
\author{Lihao Xu, Xiaogang Xiong, Bai Yang, Yunjiang Lou
\thanks{Lihao Xu, Xiaogang Xiong and Yunjiang Lou are with the School of Mechanical Engineering and Automation,
Harbin Institute of Technology Shenzhen, Shenzhen 518055, P.~R.~China.
{\tt\small \{22s053080@stu., xiongxg@, louyj@\}hit.edu.cn}}
\thanks{Yang Bai is with the Graduated School of Information Science and Technology,
Osaka University 1-5 Yamada, Suita, Osaka 565-0871, Japan
(\tt\small ybai@ist.osaka-u.ac.jp)}
}
\begin{document}

\maketitle
\thispagestyle{empty}
\pagestyle{empty}

\begin{abstract}

High-dimensional robot dynamic trajectory planning poses many challenges for traditional 
planning algorithms. Existing planning methods suffer from issues such as long computation times,
limited capacity to address intricate obstacle models, and lack of consideration for 
external disturbances and measurement inaccuracies in these high-dimensional systems. 
To tackle these challenges, this paper proposes a novel trajectory planning approach that 
combines Dynamic Control Barrier Function (DCBF) with a disturbance observer to create a 
Robust Dynamic Control Barrier Function (RDCBF) planner. This approach successfully plans 
trajectories in environments with complex dynamic obstacles while accounting for external 
disturbances and measurement uncertainties, ensuring system safety and enabling precise
obstacle avoidance. Experimental results on a mobile manipulator demonstrate outstanding
performance of the proposed approach.

\end{abstract}

\section{Introduction}

As robots become more integrated into daily life, trajectory planning faces significant challenges in ensuring 
safety during human-robot interactions in complex and dynamic scenarios \cite{HRI1}. Commonly used algorithms, 
such as precomputed signed distance fields \cite{GPMP, CHOMP, TrajOpt} and sampling-based methods \cite{RRT1,RRT2,RRT3}, 
are suitable for static or minimally changing environments. 
These algorithms generate an initial path that serves as a reference for local planners within the planning framework. 
Local planners like Artificial Potential Fields (APF) \cite{APF} and Dynamic Window Approach (DWA) \cite{DWA} 
handle dynamic obstacles but face theoretical limitations when naturally extending to high-dimensional state spaces. 
For example, the APF algorithm needs to map the potential map into the manipulator's configuration space while the 
size of velocity space in DWA grows exponentially with the number of states. Model predictive control (MPC) 
\cite{MPC1,MPC2,MPC3,MPC4} has been successfully applied for real-time planning in high-dimensional system states; 
however, computational time becomes impractical as symbolic parameters increase such as obstacle numbers or prediction steps.

Control Barrier Functions (CBFs) have become increasingly popular for ensuring safety and have been used in 
trajectory planning problems \cite{CBF1,CBF2,CBF3}. Dynamic Control Barrier Functions (DCBFs) \cite{DCBF} were 
introduced to address obstacle avoidance for rapidly moving obstacles, but they are mainly applied in low-dimensional 
spaces. Some work has successfully applied CBF algorithms to high-dimensional manipulators \cite{HRI2}, 
focusing on collision avoidance within the context of human-robot interaction. However, prior CBF algorithms either 
simplify obstacles as spheres, struggle with dynamic environments or are limited to low-dimensional spaces. 
Additionally, these algorithms rely on precise system models that neglect uncertainties from external disturbances 
and measurement errors in the environment, which prevents practical applications.

\begin{figure}
    \centering
    \includegraphics[width=0.45\textwidth]{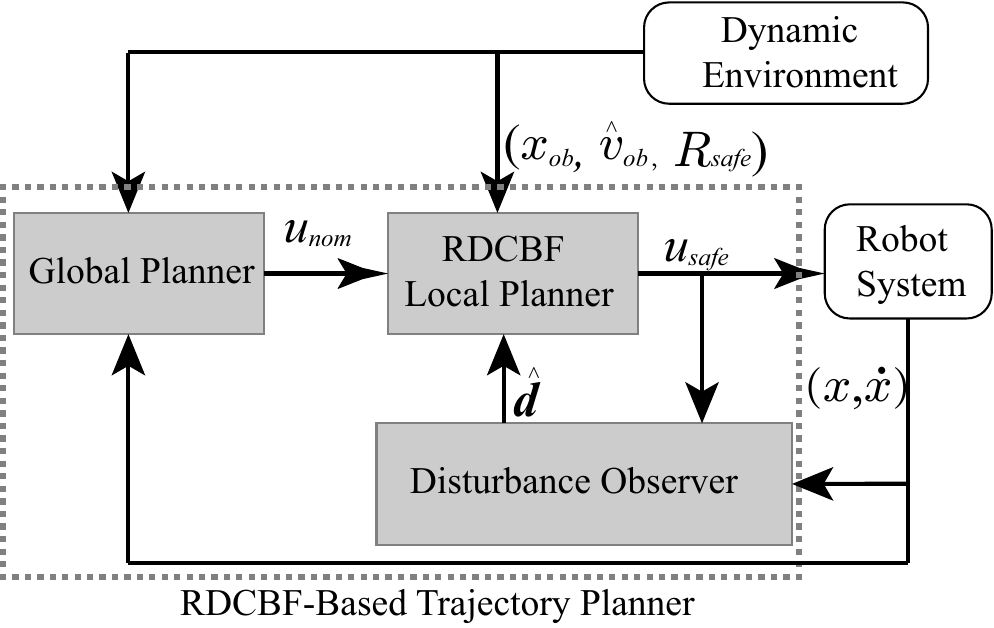}
    \caption{The proposed RDCBF-Based trajectory planner. 
  It utilizes disturbance estimates and potentially inaccurate environment information.
By employing RDCBF, a set of safety constraints are constructed to compensate for both uncertainties.
These constraints are integrated into the input $\bm{u}_{nom}$ from the global planner.
This integration allows for the generation of control commands $\bm{u}_{safe}$ that guarantee safe trajectory planning.}
    \label{structure}
    \end{figure}

While there have been studies on robust version of CBF \cite{RCBF,DOBCBF} to tackle uncertainties in system states, 
no CBF-based algorithm can currently ensure safety when measurement errors are present in the environment. To address 
limitations in handling high-dimensional planning, dynamic environments, complex obstacles, and system disturbances 
and measurement errors, this paper proposes a trajectory planner that relies on Robust Dynamic Control Barrier Function 
(RDCBF), as shown in Fig.~\ref{structure}.

The primary contributions of this study are:
\begin{itemize}
\item A safety function is designed for a mobile manipulator, which reduces wasted free space compared with algorithms that simply employ bounding spheres.
\item DCBF is leveraged to overcome the limitation of traditional CBF algorithms in handling dynamic obstacles during trajectory planning. Based on the DCBF, the RDCBF algorithm is developed to address the dynamic trajectory planning problem in the presence of external disturbances and measurement uncertainty.
\item The proposed RDCBF algorithm is applied to a high-dimensional manipulator, demonstrating their utility and real-time performance.
\end{itemize}

\section{Preliminaries}
This section reviews the main theories for RDCBF
trajectory planner design.
First, the section covers the fundamentals of CBF and DCBF; for more details,
refer to  \cite{DCBF} and \cite{CBFdetail}.
Then, the section provides a brief introduction to the Disturbance Observer (DOB) 
following the results presented in \cite{DOBCBF} and \cite{DOB}.

\subsection{Dynamic Control Barrier Function}
Consider the following nonlinear control affine system
\begin{equation}
\label{syseq}
\bm{\dot{x}}=\bm{f}(\bm{x})+\bm{g}(\bm{x})\bm{u},
\end{equation}
where $\bm{x} \in \mathcal{D} \subseteq \mathbb{R}^n$ represents the state belonging to the state space $\mathcal{D}$, 
and $\bm{u} \in \mathcal{U} \subseteq \mathbb{R}^m$ denotes the control input within the admissible control input set $\mathcal{U}$.
The system functions $\bm{f}(\cdot)$
and $\bm{g}(\cdot)$ are assumed to be locally Lipschitz continuous.
Consider the system~(\ref{syseq}) and given a compact set $\mathcal{C}$ defined by
\begin{equation}
\label{safeset}
\mathcal{C}:=\{\bm{x} \in \mathbb{R}^n : h(\bm{x}) \geq 0\},
\end{equation}
where $h(\bm{x})$ is a continuously differentiable function.
The set $\mathcal{C}$ is reffed to $\textit{safe set}$ in this paper.
The function $h(\cdot)$, parameterized by the state $\bm{x}$, reflects the safety of system~(\ref{syseq}).
If there exists a constant $\gamma$ such that: 
\begin{equation}
\label{cbfdef}
\sup_{\bm{u} \in \mathcal{U}} \left[ L_fh(\bm{x})+ L_gh(\bm{x})\bm{u} + \gamma h(\bm{x})
\right] \geq 0,\forall x \in \mathcal{D},
\end{equation}
where $L_fh(\bm{x})=\frac{\partial h}{\partial \bm{x}}\bm{f}(\bm{x})$, 
$L_gh(\bm{x})=\frac{\partial h}{\partial \bm{x}}\bm{g}(\bm{x})$
are the Lie derivatives of $h(\cdot)$ along $\bm{f}$ and $\bm{g}$ respectively.
Then $h(\bm{x})$ is called a (Zeroing) CBF of system~(\ref{syseq}).
Once given a CBF $h(\bm{x})$, the set of all control inputs that satisfy~(\ref{cbfdef})
for every $\bm{x} \in \mathcal{D}$ can be defined as:
\begin{equation}
\label{CBFconstrain}
K_{cbf}(\bm{x})=\{ \bm{u} \in \mathcal{U}: L_fh(\bm{x})+ L_gh(\bm{x})\bm{u} + \gamma h(\bm{x}) \geq 0 \}.
\end{equation}
It has been proven in \cite{RCBF} that any Lipschitz continuous controller 
$\bm{u}(\bm{x}) \in K_{cbf}(\bm{x})$ will
guarantee the forward invariance of $\textit{safe set}$ $\mathcal{C}$. 

\indent 
In a dynamic environment, the CBF $h(\bm{x})$ needs to incorporate information about the environment. 
Therefore, the Dynamic Control Barrier Function (DCBF), denoted as $h(\bm{x},\bm{x}_{ob})$
and simply represented as $h$, involves environment information.
Here, $\bm{x}_{ob}$ represents the state of the obstacle. 
The condition in~(\ref{CBFconstrain}) is reconstructed as follows:

\begin{equation}
\label{DCBFconstrain}
\sup_{\bm{u} \in \mathcal{U}} \left[ L_fh + L_gh\bm{u} +L_ph\bm{v}_{ob}
+ \gamma h \right] \geq 0,
\end{equation}
where $L_ph\bm{v}_{ob}:=\frac{\partial h}{\partial \bm{x}_{ob}}\frac{\partial \bm{x}_{ob}}{\partial t}$
and $\bm{v}_{ob}=\frac{\partial \bm{x}_{ob}}{\partial t}$ ,
coresponding safe control set of DCBF is defined as:
\begin{equation}
\label{KDCBF}
K_{dcbf}(\bm{x},\bm{x}_{ob})=\{ \bm{u} \in \mathcal{U}: 
L_fh+ L_gh\bm{u} + L_ph\bm{v}_{ob} +\gamma h \geq 0 \}.
\end{equation}
Any Lipschitz continuous controller $\bm{u}(\bm{x},\bm{x}_{ob}) \in K_{dcbf}$ will 
guarantee the forward invariance of the $\textit{safe set}$.

\subsection{Disturbance Observer (DOB)}
Consider system~(\ref{syseq}) that is disturbed by unknown 
external disturbance $\bm{d} \in \mathbb{R}^q$:
\begin{equation}
\label{sysdeq}
\bm{\dot{x}}=\bm{f}(\bm{x})+\bm{g}(\bm{x})\bm{u}+\bm{g}_d(\bm{x})\bm{d},
\end{equation}
where $\bm{g}_d(\cdot)$ is assumed to be locally Lipschitz continuous.
\begin{assumption}
The disturbance $\bm{d}(t)$ and its derivative $\dot{\bm{d}}(t)$ are bounded by
known positive constants, i.e., $\|\bm{d}(t)\|\leq\omega_0$ and 
$\|\dot{\bm{d}}(t)\|\leq\omega_1,\forall t > 0$ where $\omega_0\geq0$ and $\omega_1\geq0$.
\end{assumption}

Given system~(\ref{sysdeq}), we define the following DOB:
\begin{equation}
\label{DOB}
\begin{cases}
\begin{aligned}
\hat{\bm{d}} &=\bm{z}+\alpha \bm{W}, \\
\dot{\bm{z}} &=-\alpha \bm{\lambda}(\bm{f}+\bm{g}\bm{u}+\bm{g}_d\hat{\bm{d}}),
\end{aligned}
\end{cases}
\end{equation}
where $\hat{\bm{d}}$ is the disturbance estimation, $\alpha > 0$ is a positive tuning parameter,
$\bm{\lambda}(\bm{x})$ is the observer gain matrix
satisfying $-\bm{x}^T\bm{\lambda}\bm{g}_d\bm{x}\leq-\bm{x}^T\bm{x}$ for any $\bm{x} \in \mathcal{D}$
and $\bm{W}(\bm{x})$ is a function satisfying $\frac{\partial\bm{W}}{\partial\bm{x}}=\bm{\lambda}(\bm{x})$.

The disturbance estimation error is defined as:
\begin{equation}
\label{derror}
\bm{e}_d=\hat{\bm{d}}-\bm{d}.
\end{equation}
Then, $\dot{\bm{e}}_d=\dot{\hat{\bm{d}}}-\dot{\bm{d}}=\dot{\bm{z}}+\alpha \bm{\lambda}\dot{\bm{x}}-\dot{\bm{d}}$.
Substituting disturbed system~(\ref{sysdeq}) and DOB~(\ref{DOB}) into $\dot{\bm{e}_d}$ yields:
\begin{equation}
\label{derrorderivate}
\dot{\bm{e}_d}=-\alpha \bm{\lambda}\bm{g}_d\bm{e}_d-\dot{\bm{d}}.
\end{equation}
Choose a Lyapunov candidate function $V=\frac{1}{2}\|\bm{e}_d\|^2$.
The following results have been derived in \cite{DOBCBF}:
\begin{equation}
\label{Vderivate}
\dot{V}\leq-2\kappa V+\frac{\omega_1^2}{2\mu },
\end{equation}
where $\kappa := \alpha - \frac{\mu}{2}$ and $\mu$ is a constant satisfying $0<\mu<2\alpha$.
By comparison lemma~\cite{COMPARE}, $\bm{e}_d(t)$ satisfies:
\begin{equation}
\label{etconverge}
\|\bm{e}_d(t)\| \leq \sqrt{\|\bm{e}_d(0)\|^2\exp(-2\kappa t)+
\frac{\omega_1^2(1 - \exp(-2\kappa t))}
{2\mu\kappa}},
\end{equation}
which gives the conclusion that the disturbance estimation error $\bm{e}_d$ is
uniformly bounded.

\section{RDCBF Design for Trajectory Planning}
\label{sec:RDCBF}
We aim to handle complex dynamic obstacles, rather than simplifying them to spheres, 
and ensure system safety in the presence of disturbance and velocity estimation errors.
Therefore, our first step is to design a proper and fundamental CBF (function $h(\cdot)$)
tailored to intricate obstacles. This CBF is task-dependent and aligned with specific safety definitions 
(such as distance to obstacles in this paper, or joint angles in the inverted pendulum problem). 
Subsequently, employing the theory of RDCBF introduced in this paper,
we address challenges posed by dynamic environments, external disturbance,
and environmental estimation errors.
\subsection{Geometric Computing in Mobile Manipulator}
\label{sec:disturance}
Existing algorithms, such as GJK \cite{GJK}, 
use iterative methods to compute the distance between convex geometric entities. 
However, CBF-based algorithms require analytical expressions for distance.
Many existing planning methods treat the obstacle as a sphere 
and model the manipulator as several capsules or spheres, 
corresponding to the distance between points and line segments. 
In \cite{Geometric}, various methods for distance computation are presented. 
Assuming that the algorithms for calculating distances of point-to-point and point-to-segment
in 3D space are well-established.
For more intricate scenarios,
additional algorithms for geometric computations will be presented below.

\subsubsection {Segment-to-segment Distance in 3D Space}
Suppose there are two line segments $\bm{l}_A, \bm{l}_B \in \mathbb{R}^6$ 
defined by endpoints $(\bm{A}_0, \bm{A}_1)$ and $(\bm{B}_0, \bm{B}_1)$, respectively.
A geometric method is employed to compute the minimum distance between $\bm{l}_A$ and $\bm{l}_B$.
\begin{figure}
  \centering
  \includegraphics[width=0.5\linewidth]{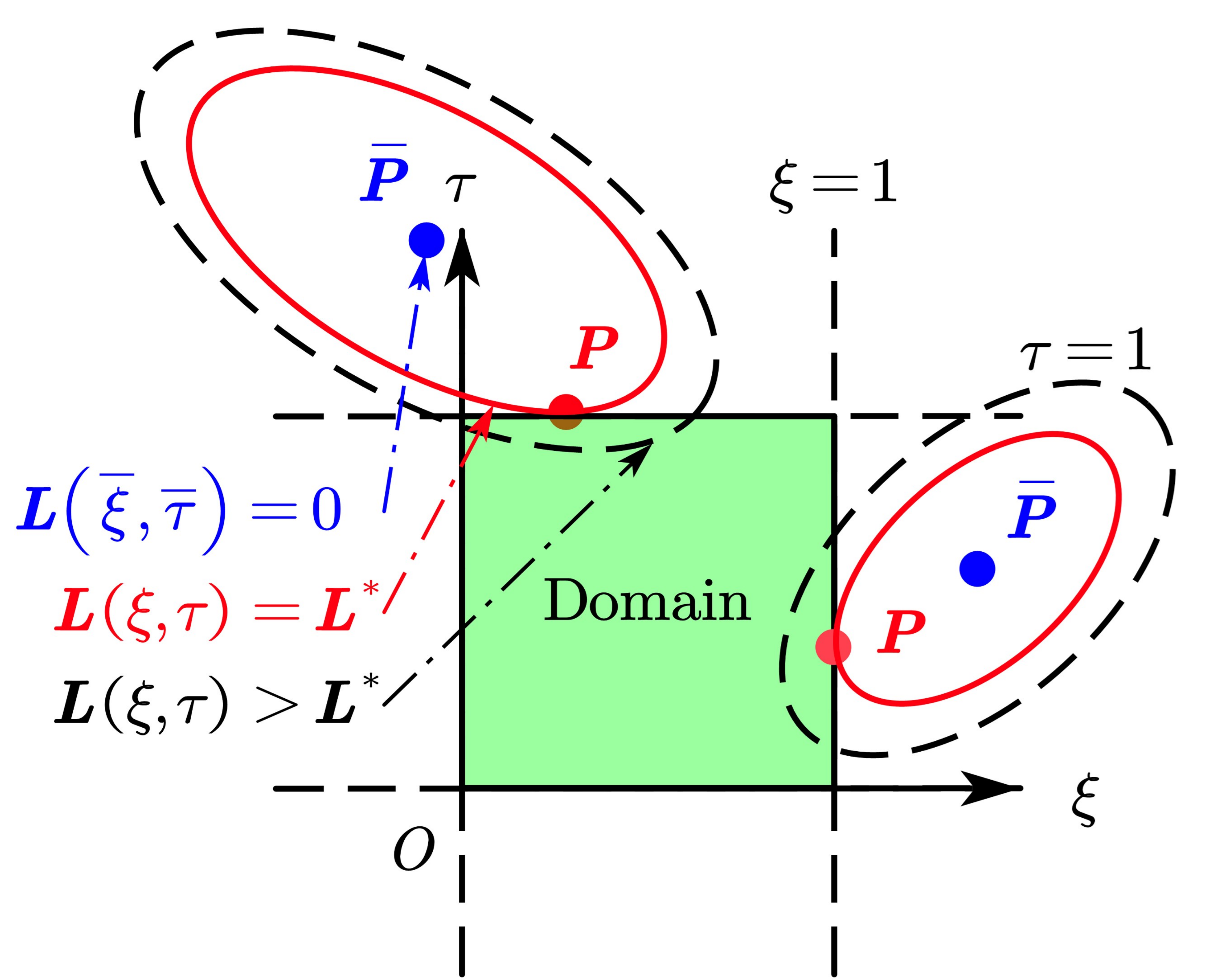}
  \caption{Elliptic level curves $\bm{L}$ and coresponding critical points $\bm{P}$.}
  \label{ltol}
\end{figure}

\indent Let points $\bm{A}(\xi) \in \bm{l}_A$ and $\bm{B}(\tau) \in \bm{l}_B$ :
\begin{equation}
\label{sands}
\begin{aligned}
\bm{A}(\xi) &:=\bm{A}_0+\xi(\bm{A}_1-\bm{A}_0)&,\xi\in [0,1]&
\\
\bm{B}(\tau) &:=\bm{B}_0+\tau(\bm{B}_1-\bm{B}_0)&,\tau\in [0,1]&.
\end{aligned}
\end{equation}
The square distance between $\bm{A}(\xi)$ and $\bm{B}(\tau)$ is given by:
\begin{equation}
\label{Ds2s}
\begin{aligned}
\bm{D}(\xi,\tau)& = \|\bm{A}(\xi)-\bm{B}(\tau)\|_2^2 \\
&=z_1\xi^2-2z_2 \xi \tau+z_3\tau^2+2z_4\xi-2z_5\tau+z_6,\\
\end{aligned}
\end{equation}
where:
\begin{equation}
\label{Ds2swhere}
\begin{aligned}
  z_1&=(\bm{A}_1-\bm{A}_0)\cdot(\bm{A}_1-\bm{A}_0),\\
  z_2&=(\bm{A}_1-\bm{A}_0)\cdot(\bm{B}_1-\bm{B}_0),\\
  z_3&=(\bm{B}_1-\bm{B}_0)\cdot(\bm{B}_1-\bm{B}_0),\\
  z_4&=(\bm{A}_1-\bm{A}_0)\cdot(\bm{A}_0-\bm{B}_0),\\
  z_5&=(\bm{B}_1-\bm{B}_0)\cdot(\bm{A}_0-\bm{B}_0),\\
  z_6&=(\bm{A}_0-\bm{B}_0)\cdot(\bm{A}_0-\bm{B}_0).\\
\end{aligned}
\end{equation}

\indent Let $\Delta :=z_1z_3-z_2^2$, then if $\Delta = 0$,
$\bm{l}_A$ and $\bm{l}_B$ are parallel. The minimum value of $\bm{D}$, denoted as $\bm{D}^*$,
is determined as the smaller of the values derived from $\bm{A}_0$ to $\bm{l}_B$ and $\bm{A}_1$ to $\bm{l}_B$.

If $\Delta \neq 0$, 
we can reconstruct (\ref{Ds2s}) as follows:
\begin{equation}
  \label{Ds2selliptic}
  \begin{aligned}
    \bm{D}(\xi,\tau) &= \bm{L}(\xi, \tau)+\bm{D}_0
  \end{aligned}
  \end{equation}
where:
\begin{equation}
  \label{optimalsolution}
  \begin{aligned}
  \bm{L}(\xi, \tau) &:= z_1(\xi-\bar{\xi})^2-2z_2(\xi-\bar{\xi})(\tau-\bar{\tau})+z_3(\tau-\bar{\tau})^2,\\
  \bm{D}_0&:=z_6-(z_1\bar{\xi}^2+2z_2\bar{\xi}\bar{\tau}+z_3\bar{\tau}^2),\\
  \bar{\xi}&=\frac{z_2z_5-z_3z_4}{\Delta},
  \bar{\tau}=\frac{z_1z_5-z_2z_4}{\Delta}.\\
  \end{aligned}
  \end{equation}
From \eqref{Ds2selliptic}, the value of $\bm{D}$ is decomposed into an elliptic function $\bm{L}(\xi, \tau)$,
the center $\bar{\bm{P}}$ of which is ($\bar{\xi}, \bar{\tau}$), and a constant term $\bm{D}_0$. 
Thus the minimization of $\bm{D}$ is equivalent to the minimization of $\bm{L}$. 
The minimum value of $\bm{L}$ is denoted as $\bm{L}^*$.
  
As illustrated in Fig. \ref{ltol}, $\bm{L}$ is considered as a series of elliptic
level curves with $\bar{\bm{P}}$ as the center of each ellipse.
If $\bar{\bm{P}}$ lies within the domain of (${\xi}, {\tau}$) (colored in green),
we can obtain $\bm{L}^* = 0$, corresponding to $\bm{D}^* = \bm{D}_0$. Otherwise,
the minimum of $\bm{L}$ must be attained at the boundary of the domain.
Thus, the problem of computing $\bm{L}^*$ is converted to determining the critical point $\bm{P}$
(depicted as red points in the Fig. \ref{ltol}) where the elliptic function $\bm{L}(\xi, \tau)$ first
intersects the domain of (${\xi}, {\tau}$). The detailed process of calculating $\bm{P}$ 
can be determined by simultaneously solving the domain constraints and the elliptic function equation.
The process of computing $\bm{D}^*$ is outlined in \textbf{Algorithm \ref{region012solution}}.
This algorithm demonstrates the conversion of segment-to-segment distance into
either point-to-segment distance or point-to-point distance through an analysis of the
geometric relationship between elliptic contours and the domain.
\begin{algorithm}
  \caption{Minimum segment-to-segment distance in 3D space.}
  \label{region012solution}
  \KwData{Endpoints $(\bm{A}_0, \bm{A}_1)$ and $(\bm{B}_0, \bm{B}_1)$}
  \KwResult{Minimum value $\bm{D}^*$ }
  Calculate $z_1,z_2,z_3,z_4,z_5,z_6$ by (\ref{Ds2swhere});\\
  Calculate $\Delta, \bm{D}_0, \bm{L}$;\\
  \If{ $\Delta == 0$}
  {
    $\bm{D}^*  \leftarrow    \min \{\bm{D}(\bm{A}_0,\bm{l}_B) , \bm{D}(\bm{A}_1,\bm{l}_B) \};$
  }
  \Else
  {
    $\bar{\bm{P}}  \leftarrow$ caculate center of ellipse ($\bar{\xi}, \bar{\tau}$);\\
    \If{$\bar{\bm{P}}$ lies in domain of (${\xi}, {\tau}$)}
     {$ \bm{D}^*  \leftarrow \bm{D}_0$;}
    \Else 
      {
        Calculate the critical point $\bm{P}$ and corresponding coordinate (${\xi}^*, {\tau}^*$);\\
        $\bm{D}^*  \leftarrow$ $\bm{D}({\xi}^*, {\tau}^*)$;\\
      }
  }  
  \end{algorithm}
  
\subsubsection {Segment-to-rectangle Distance in 3D Space}
Computing the minimum segment-to-rectangle distance in 3D space is considerably more intricate than that between line segments.
This depends on all the distance calculation methods mentioned before.

Consider a line segment $\bm{l}$ defined by endpoints ($\bm{P}_0$,$\bm{P}_1$) 
and a 3D rectangle $\bm{\mathcal{R}} \in \mathbb{R}^{12}$ 
defined by four vertices $\bm{V}_i\in\mathbb{R}^3,i = 0,1,2,3$.
\begin{figure}
  \centering
  \includegraphics[width=0.7\linewidth]{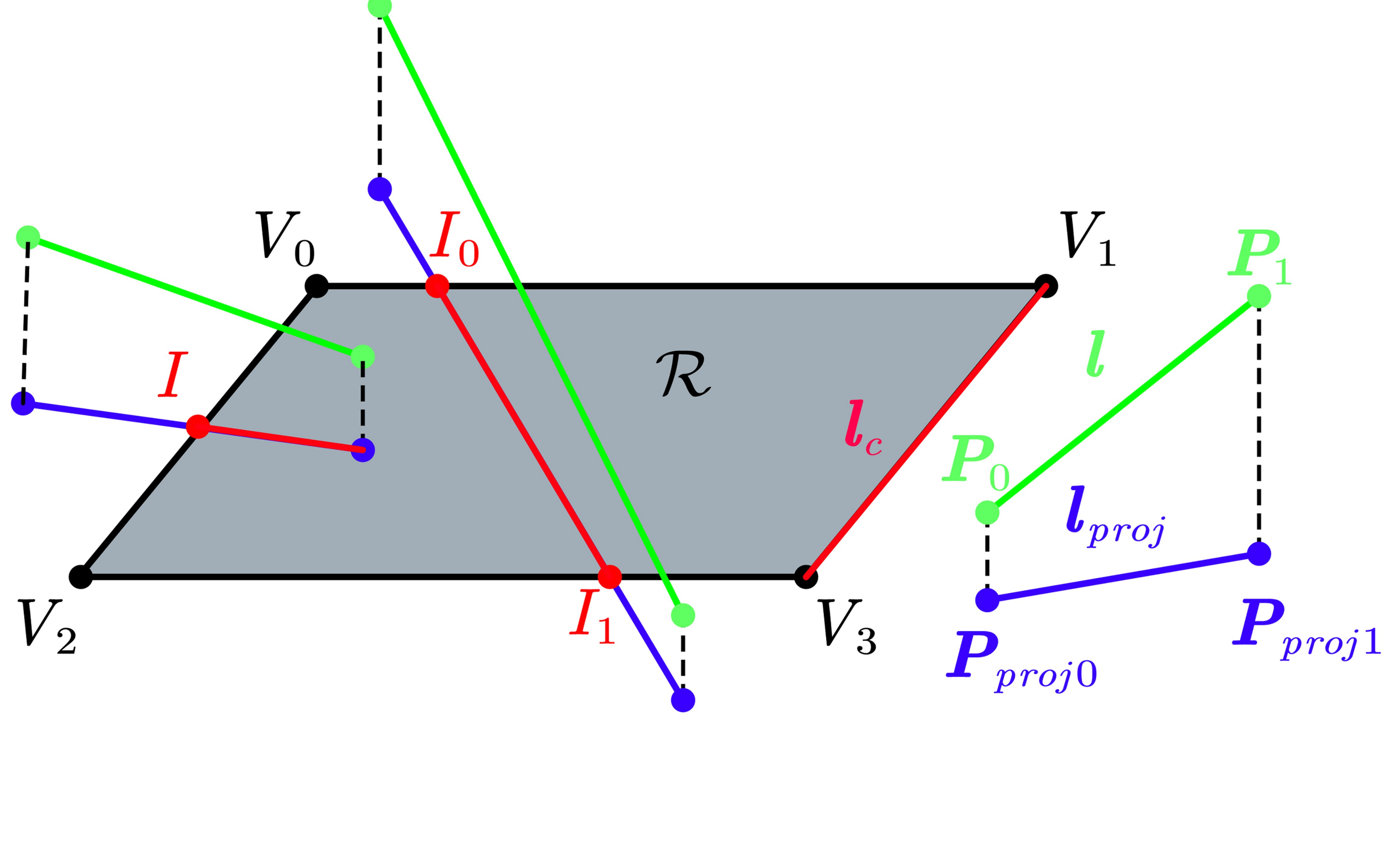}
  \caption{Illustration of the three types of projections of green line segments $\bm{l}$
  onto the plane containing rectangle $\bm{\mathcal{R}}$. 
  And the red line segments $\bm{l}_c$ are the closest line segments within $\bm{\mathcal{R}}$ to $\bm{l}$.}
  \label{ltor}
\end{figure}
Since the algorithm of minimum segment-to-segment distance has already been derived, 
we can identify a line segment $\bm{l}_c$ (subset of the rectangle) within $\bm{\mathcal{R}}$, 
such that the shortest distance from $\bm{l}$ to $\bm{l}_c$ is 
also the shortest distance from $\bm{l}$ to $\bm{\mathcal{R}}$.
We determine $\bm{l}_c$ through the projection method.
As shown in the Fig. \ref{ltor}, the projection of $\bm{l}$ onto the plane containing $\bm{\mathcal{R}}$ 
results in a line segment $\bm{l}_{proj}$ with endpoints ($\bm{P}_{proj0} , \bm{P}_{proj1}$).
There are three potential positional relationships between $\bm{l}_{{proj}}$ and $\bm{\mathcal{R}}$:
\begin{itemize}
  \item $\bm{l}_{{proj}}$ may not intersect with $\bm{\mathcal{R}}$.
  \item $\bm{l}_{{proj}}$ may intersect with $\bm{\mathcal{R}}$, 
  resulting in a single intersection point $\bm{I}$.
  \item $\bm{l}_{{proj}}$ may intersect with $\bm{\mathcal{R}}$, 
  resulting in two intersection points $\bm{I}_0$ and $\bm{I}_1$.
\end{itemize}
The number of intersection points is denoted as $n$.
A succinct method for calculating the minimum segment-to-rectangle
distance is outlined in \textbf{Algorithm \ref{reclinesolution}}.
\begin{algorithm}
  \caption{Minimum segment-to-rectangle distance in 3D space.}
  \label{reclinesolution}
  \KwData{Line segment $\bm{l}$, 3D rectangle $\bm{\mathcal{R}}$}
  \KwResult{Minimum value $\bm{D}^*$ }
  $\bm{l}_{proj}$ $\leftarrow$ project $\bm{l}$ onto the plane containing $\bm{\mathcal{R}}$ \;
  $n,\bm{I},\bm{I}_0,\bm{I}_1$ $\leftarrow$ caculate intersection points of $\bm{l}_{proj}$ and $\bm{\mathcal{R}}$\;
  
  \If{n == 0}
  {
  $\bm{l}_c$  $\leftarrow$ the closest edge of $\bm{\mathcal{R}}$ to $\bm{l}$\;
  }
  \If{n == 1}
  {
  $\bm{l}_c$  $\leftarrow$ line segment ($\bm{I},\bm{P}_{proj0}$) or ($\bm{I},\bm{P}_{proj1}$)\;
  }
  \If{n == 2}
  {
  $\bm{l}_c$  $\leftarrow$ line segment ($\bm{I}_0,\bm{I}_1$)\;
  }
  $\bm{D}^*$  $\leftarrow$ caculate minimum distance of $(\bm{l}_c,\bm{l})$ 
  by \textbf{Algorithm \ref{region012solution}}
  \end{algorithm}
\subsubsection {Bounding Box of Cuboid}
As a cuboid is composed of six 3D rectangles, the computation of six segment-to-rectangle distances
per cuboid can significantly impact computing efficiency. 
To address this problem, the paper presents a novel bounding box for cuboid.

For any cuboid, where the length is $a_r$, the width is $b_r$, 
and the height is $h_r$, we can establish the condition $a_r \geq b_r \geq h_r$.
Then, we define a rectangle $\bm{\mathcal{R}}_2$ in such a way that its four edges
are parallel to the length or width edges of the cuboid. Additionally,
its geometric center coincides with the center of the cuboid,
as illustrated by the yellow rectangle $\bm{\mathcal{R}}_2$ in Fig. \ref{Bounding}.
A view perpendicular to $\bm{\mathcal{R}}_2$ is depicted in Fig. \ref{verticalview}.
The distance from $\bm{\mathcal{R}}_2$ to $\bm{\mathcal{R}}_1$ is denoted as $d_{re}$, where $d_{re} \in [0, \frac{b_r}{2}]$.
Here, the length and width of $\bm{\mathcal{R}}_2$ are $x_r = a_r - 2d_{re}$ and $y_r = b_r - 2d_{re}$, respectively.
The bounding box can be obtained by extending the rectangle $\bm{\mathcal{R}}_2$ by a distance of 
$r_e = \sqrt{2d_{re}^2+h_r^2/4}$,
which represents the length of the closest pair of corners between $\bm{\mathcal{R}}_2$ and the cuboid's vertices.
The volume of the bounding box is calculated by:
\begin{equation}
\label{BoundingV}
\begin{aligned}
V_b&=2x_ry_rr_e+\pi r_e^2(x_r+y_r)+\frac{4}{3}\pi r_e^3.\\
\end{aligned}
\end{equation}
As $\bm{\mathcal{R}}_2$ varies with $d_{re}$, it can span from the yellow dashed line in Fig. \ref{verticalview}
at its minimum to overlapping with $\bm{\mathcal{R}}_1$ at its maximum.
The volume of the resulting bounding box varies accordingly.
The optimal bounding box can be determined by finding the root of $\frac{\partial V_b}{\partial d_{re}} = 0$
through numerical solver. This bounding box slightly occupies free space but significantly enhances computing speed.
If the robot needs to avoid the cuboid accurately, it can also choose to consider six rectangles.
\begin{figure}
\centering
\includegraphics[width=0.68\linewidth,trim=15cm 3.8cm 12cm 4.0cm, clip]{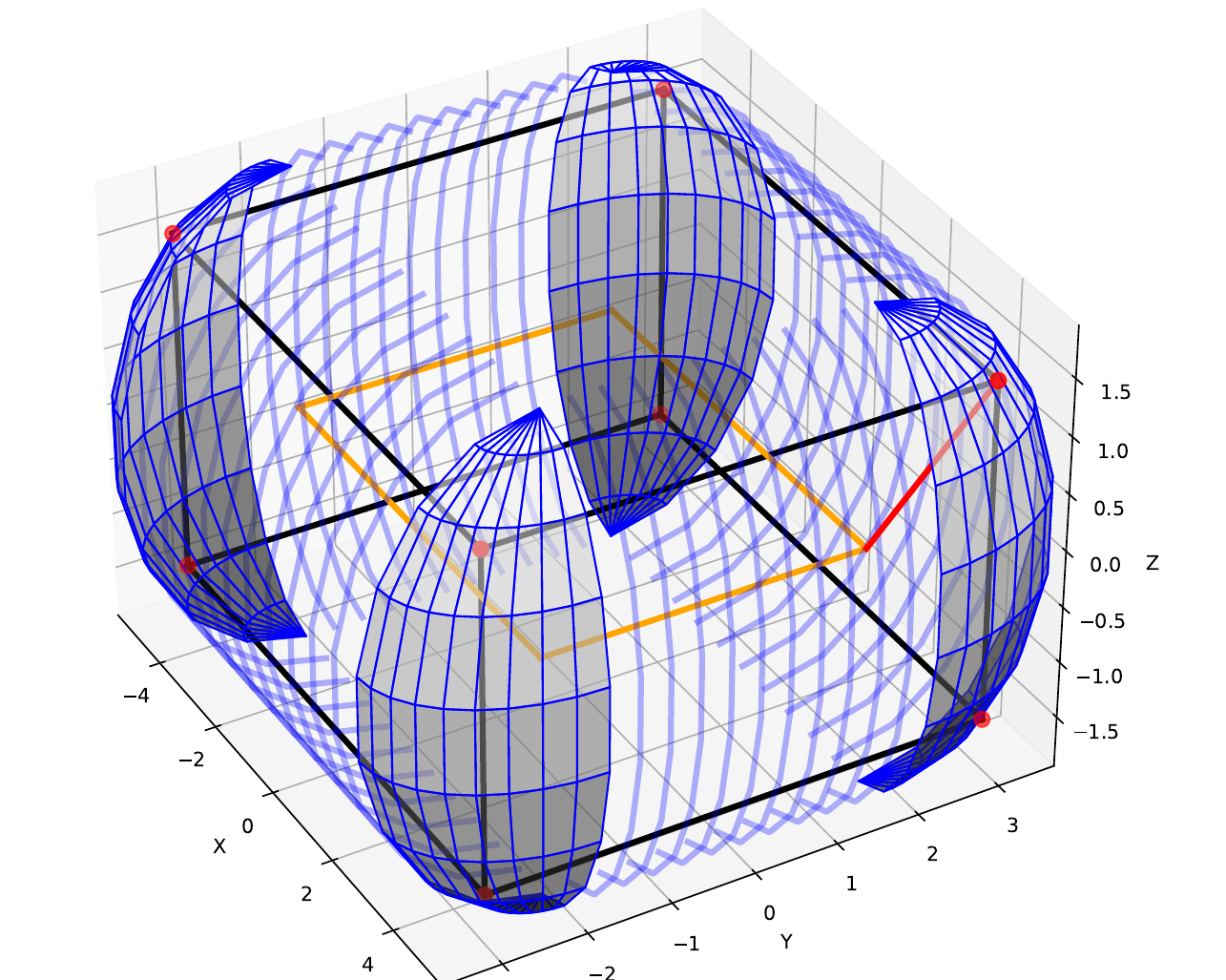}
\caption{ Bounding Box for Cuboid.}
\label{Bounding}
\end{figure}
\begin{figure}
\centering
\includegraphics[width=0.58\linewidth]{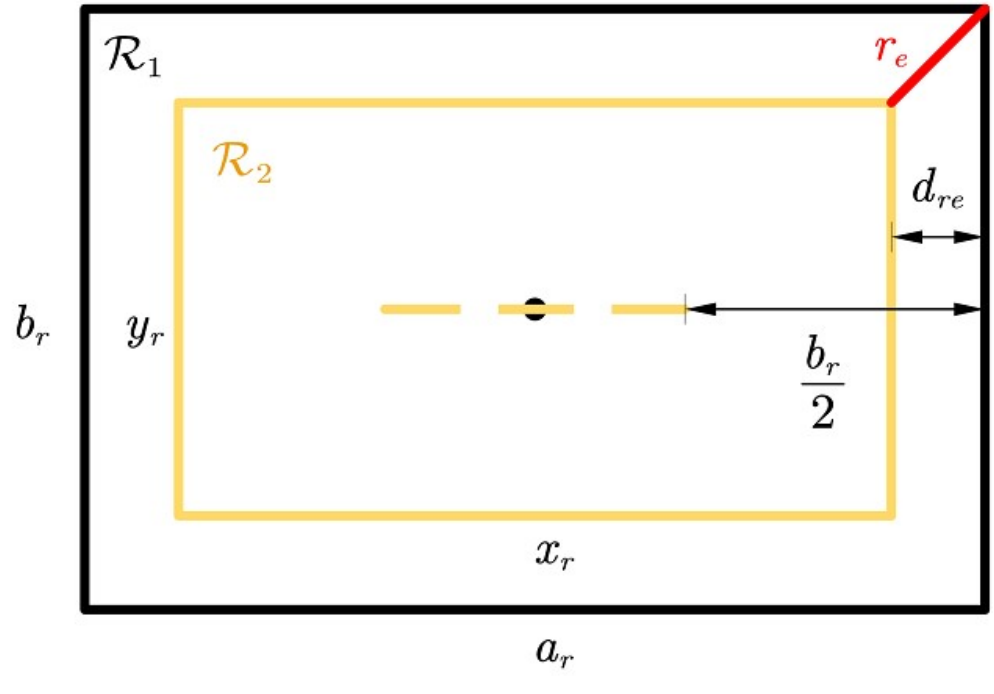}
\caption{ Vertical view of cuboid $\bm{\mathcal{R}}_1$ and rectangle $\bm{\mathcal{R}}_2$.}
\label{verticalview}
\end{figure}

\subsection{RDCBF Design for Mobile Manipulator}

Let define the system state by the positions and joint angles,
while the output of the RDCBF planner comprises velocity commands. 
We consider a simplified version of \eqref{sysdeq} with matched disturbance $\bm{d}$.
Therefore, the system can be succinctly expressed as:
\begin{equation}
\label{systemforplanning}
\dot{\bm{x}} = \bm{u} + \bm{d}.
\end{equation}

To ensure the safety of the entire robot, it is divided into multiple components labeled as 
$\mathcal{L}_i, i\in \mathbb{N}^+ $. The function $\mathcal{L}=[\mathcal{L}_1,\mathcal{L}_2,\cdots]^T$ relies on 
the system state and requires forward kinematics of the robot. Obstacle positions and velocities are represented 
by $\bm{p}_j$ and $\bm{v}_j$, respectively, where $j$ also belongs to positive integers. In Sec.~\ref{sec:disturance}, 
the methods for calculating distances between different obstacles have been discussed; 
however, distance alone cannot determine system safety.

\begin{assumption}
Assuming that the velocities of obstacles and robot remain constant within a planning period 
$T$ (which is a reasonable assumption for short $T$).
\end{assumption}

The safety of the system (\ref{systemforplanning}) is influenced by both
system state $(\bm{x},\dot{\bm{x}})$ 
and obstacles state $(\bm{p},\bm{v})$.
We define the safety function $h(\cdot)$ for trajectory planning in a dynamic environment as follows:
\begin{equation}
    \label{RDCBFforplaning}
    h_{ij}=h(\mathcal{L}_i(\bm{x},\dot{\bm{x}}),\bm{p}_j,\bm{v}_j)=\bm{D}_{ij}
    (\mathcal{L}_i,\bm{p}_j+T\bm{v}_j)-R_{ij}^2, 
\end{equation}
where $i$ and $j$ represent the indices of the components of robot and obstacles, respectively.
$\mathcal{L}_i(\bm{x},\dot{\bm{x}})=\mathcal{L}_i(\bm{x}+T\dot{\bm{x}})$ represents the state of the $i$-th component after time $T$, 
$\bm{D}_{ij}$, utilizing \textbf{Algorithm \ref{region012solution}} and \textbf{Algorithm \ref{reclinesolution}},
represents minimum square distance between $\mathcal{L}_i$ and $j$-th obstacle, 
and $R_{ij}$ the safe distance between them. As show in (\ref{RDCBFforplaning}), 
$h(\cdot)$ performs a one-step forward prediction 
of the state over time $T$ to avoid collisions during online planning. 
Let $\hat{\bm{v}}$ be the measured velocity. We define the velocity error of the obstacle as below:
\begin{equation}
\label{verror}
\bm{e}_v = \hat{\bm{v}} - \bm{v},\|\bm{e}_v\|_2\leq\epsilon_v.
\end{equation}
To ensuring safety under disturbance and measurement error of obstacle's velocity,
we propose the RDCBF theorem as follows:
\begin{theorem}
Consider the dynamic safe function $h(\cdot)$ in (\ref{RDCBFforplaning}), the system \eqref{sysdeq}
and the DOB given in~(\ref{DOB}) with $\hat{\bm{d}}(0)=\bm{0}$. Suppose that Assumption 1 and 2 hold, 
and $h_0 > 0$. If there exists constants $\gamma,\alpha,\beta >0$ such that $\alpha > \frac{\gamma+\mu}{2}$, 
$\beta > \frac{\|\bm{e}_d(0)\|^2}{2h_0}$, and 
\begin{equation*}
\sup_{\bm{u} \in \mathcal{U}}
\left[
L_fh 
- \|L_{g_{d}}h\|\Gamma 
- \frac{\omega_1^2}{2\mu\beta} 
- \chi  
+ \Lambda
+ L_gh\bm{u}
+ \gamma h
\right] 
\geq 0
\end{equation*}
 hold true, where 
$L_{g_{d}}h = \frac{\partial h}{\partial \bm{x}}\bm{g}_{d}$,
$\Gamma = \omega_0 + \sqrt{\omega_0^2+\frac{\omega_1^2}{2\mu \kappa}}$, 
$\chi = \frac{\beta\|L_{g_{d}}h\|^2}{4\alpha-2\mu-2\gamma}$ and 
$\Lambda=\frac{\partial h}{\partial \bm{p}}\hat{\bm{v}} 
- \left\|\frac{\partial h}{\partial \bm{p}}\right\|_1\epsilon_v$.
Then any Lipschitz continuous controller 
$\bm{u}(\bm{x},\bm{p},\hat{\bm{d}},\hat{\bm{v}}) \in
K_{rdcbf}(\bm{x},\bm{p},\hat{\bm{d}},\hat{\bm{v}})=\{ \bm{u} \in \mathcal{U}: 
\phi + L_gh\bm{u}\geq 0 \}
$, where
$\phi = L_fh+L_{g_{d}}h\hat{\bm{d}} - \frac{\omega_1^2}{2\mu\beta}-\chi+\gamma h + \Lambda$
will guarantee $h(t) \geq 0$ for all $t \geq 0$.
\end{theorem}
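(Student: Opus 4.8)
The plan is to prove safety by coupling the safety function $h$ with the disturbance--observer Lyapunov function $V=\frac{1}{2}\|\bm{e}_d\|^2$ into a single composite barrier, and then to close the argument with the comparison lemma \cite{COMPARE}. The guiding idea is that the extra terms $\chi$ and $\frac{\omega_1^2}{2\mu\beta}$ inside $\phi$ are deliberately engineered to absorb, respectively, the cross term coming from the unknown disturbance error and the forcing term in the observer bound (\ref{Vderivate}).

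First I would differentiate $h$ along the true disturbed dynamics (\ref{sysdeq}), so that $\dot h = L_fh + L_gh\bm{u} + L_{g_{d}}h\,\bm{d} + \frac{\partial h}{\partial \bm{p}}\bm{v}$, where the obstacle velocity enters through the last term under Assumption~2. Because both $\bm{d}$ and $\bm{v}$ are unknown, I would substitute $\bm{d}=\hat{\bm{d}}-\bm{e}_d$ and $\bm{v}=\hat{\bm{v}}-\bm{e}_v$. Using $\|\bm{e}_v\|_2\leq\epsilon_v$ from (\ref{verror}) together with Hölder's inequality, the obstacle contribution is lower bounded by $\Lambda$, which isolates the single troublesome term $-L_{g_{d}}h\,\bm{e}_d$. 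Substituting the controller constraint $\phi+L_gh\bm{u}\geq 0$ then yields $\dot h + \gamma h \geq \frac{\omega_1^2}{2\mu\beta}+\chi-L_{g_{d}}h\,\bm{e}_d$.

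Next I would introduce the composite function $\psi:=h-\frac{1}{\beta}V=h-\frac{1}{2\beta}\|\bm{e}_d\|^2$ and differentiate it, invoking $\dot V\leq -2\kappa V+\frac{\omega_1^2}{2\mu}$ from (\ref{Vderivate}); the $\frac{\omega_1^2}{2\mu\beta}$ term cancels the forcing produced by $-\frac{1}{\beta}\dot V$. After rewriting $-\gamma h=-\gamma\psi-\frac{\gamma}{\beta}V$, the bound becomes $\dot\psi\geq \chi-\gamma\psi+\frac{2\kappa-\gamma}{2\beta}\|\bm{e}_d\|^2-L_{g_{d}}h\,\bm{e}_d$. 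The crux is then to dominate the cross term by Young's inequality, $-L_{g_{d}}h\,\bm{e}_d\geq -\frac{c}{2}\|L_{g_{d}}h\|^2-\frac{1}{2c}\|\bm{e}_d\|^2$, with the tuned choice $c=\frac{\beta}{2\kappa-\gamma}$. I expect this matching to be the main obstacle: one must check that $\alpha>\frac{\gamma+\mu}{2}$ keeps $2\kappa-\gamma>0$ (hence $c>0$ and $\chi>0$), that the three $\|\bm{e}_d\|^2$ contributions cancel exactly, and that $\frac{c}{2}\|L_{g_{d}}h\|^2$ collapses precisely to $\chi$ because $4\alpha-2\mu-2\gamma=2(2\kappa-\gamma)$. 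These cancellations leave the clean inequality $\dot\psi\geq-\gamma\psi$.

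Finally I would verify the initial condition and conclude. Since $\hat{\bm{d}}(0)=\bm{0}$ and $h_0>0$, the hypothesis $\beta>\frac{\|\bm{e}_d(0)\|^2}{2h_0}$ gives $\psi(0)=h_0-\frac{\|\bm{e}_d(0)\|^2}{2\beta}>0$; the comparison lemma applied to $\dot\psi\geq-\gamma\psi$ then gives $\psi(t)\geq\psi(0)\exp(-\gamma t)>0$, and since $V\geq 0$ we obtain $h(t)=\psi(t)+\frac{1}{\beta}V(t)>0$ for all $t\geq 0$. For completeness I would add a feasibility remark showing $K_{rdcbf}$ is nonempty: bounding $\|\hat{\bm{d}}\|\leq\|\bm{d}\|+\|\bm{e}_d\|\leq\Gamma$ through Assumption~1 and (\ref{etconverge}) shows that the stated supremum condition, which replaces $L_{g_{d}}h\,\hat{\bm{d}}$ by its worst case $-\|L_{g_{d}}h\|\Gamma$, indeed guarantees the existence of an admissible $\bm{u}$.
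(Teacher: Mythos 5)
Your proposal is correct and follows essentially the same route as the paper: your composite function $\psi = h - \tfrac{1}{\beta}V$ is just the paper's candidate $\bar{h}=\beta h-\tfrac{1}{2}\bm{e}_d^T\bm{e}_d$ rescaled by $1/\beta$, your Young's-inequality step with $c=\tfrac{\beta}{2\kappa-\gamma}$ is exactly the paper's completion of the square, and the handling of $\bm{e}_v$, the initial condition, and the comparison-lemma conclusion all match. Your closing feasibility remark on the nonemptiness of $K_{rdcbf}$ via $\|\hat{\bm{d}}\|\leq\Gamma$ is a small but welcome addition that the paper only states implicitly.
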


\begin{proof}
If $\hat{\bm{d}}(0)=0$, then 
$\|\hat{\bm{d}}(t)\|=\|{\bm{d}}(t)+\bm{e}_d(t)\|<\Gamma$ 
from (\ref{etconverge}). Define a candidate RDCBF $\bar{h}$ as $\bar{h}=\beta h - \frac{1}{2} \bm{e}_d^T\bm{e}_d$.
One can see that $\bar{h}\geq0$ implies $h\geq\frac{\|\bm{e}_d\|^2}{2\beta}\geq0$. 
At $t=0$, $\bar{h}(0)>0$ since $\beta > \frac{\|\bm{e}_d(0)\|^2}{2h_0}$. 
Moreover,
\begin{equation*}
    \begin{aligned}
        \dot{\bar{h}}&=\beta\dot{h}-\dot{V}\\
    &\geq \beta(L_fh+L_gh\bm{u}+L_{g_d}h\bm{d}+\frac{\partial h}{\partial \bm{p}}\bm{v})
    +\kappa\bm{e}_d^T\bm{e}_d-\frac{\omega_1^2}{2\mu}\\
    &=\beta(L_fh+L_gh\bm{u}+L_{g_d}h\hat{\bm{d}}+\frac{\partial h}{\partial \bm{p}}\bm{v}) - \frac{\omega_1^2}{2\mu}
    +\frac{\gamma}{2}\bm{e}_d^T\bm{e}_d\\
    &\quad +(\alpha-\gamma/2-\mu/2)\bm{e}_d^T\bm{e}_d -\beta L_{g_d}h\bm{e}_d+\beta\chi-\beta\chi\\
    &=\beta(L_fh+L_gh\bm{u}+L_{g_d}h\hat{\bm{d}}+\frac{\partial h}{\partial \bm{p}}\bm{v}-\chi) - \frac{\omega_1^2}{2\mu}
    +\frac{\gamma}{2}\bm{e}_d^T\bm{e}_d\\
    &\quad + 
    \left\|\sqrt{(\alpha-\gamma/2-\mu/2)}\bm{e}_d^T-\frac{\beta L_{g_d}h}{2\sqrt{(\alpha-\gamma/2-\mu/2)}}\right\|_2^2\\
    &\geq \beta(L_fh+L_gh\bm{u}+L_{g_d}h\hat{\bm{d}}+\frac{\partial h}{\partial \bm{p}}\bm{v}-\chi) - \frac{\omega_1^2}{2\mu}
    +\frac{\gamma}{2}\bm{e}_d^T\bm{e}_d\\
    &\geq \beta(L_fh+L_gh\bm{u}+L_{g_d}h\hat{\bm{d}}+\frac{\partial h}{\partial \bm{p}}\hat{\bm{v}}
    -\left\|\frac{\partial h}{\partial \bm{p}}\right\|_1\epsilon_v-\chi)\\
    &\quad - \frac{\omega_1^2}{2\mu}
    +\frac{\gamma}{2}\bm{e}_d^T\bm{e}_d=\beta(\phi + L_gh\bm{u}-\gamma h)+\frac{\gamma}{2}\bm{e}_d^T\bm{e}_d\\
    &\geq -\gamma(\beta h - \frac{1}{2}\bm{e}_d^T\bm{e}_d)
    \end{aligned}
\end{equation*}
Therefore, any $\bm{u}\in K_{rdcbf}$ yields $\dot{\bar{h}}\geq-\gamma\bar{h}$, which implies
that $\bar{h}(t) \geq 0, \forall t\geq0$ as $\bar{h}(0) \geq 0$. Thus, $h(t)\geq0,\forall t\geq0$ 
\end{proof}

The structure of RDCBF-based planner is depicted in Fig.\ref{structure}. 
The global planner provides velocity commands $\bm{u}_{nom}$,
and the RDCBF planner modifies these commands to ensure system safety.

\section{Case Study}
\subsection{Problem Statement}
This work aims to incorporate RDCBF in trajectory planning tasks to ensure safety in complex dynamic environments,
even in the presence of external disturbance and the measured error of obstacle velocity.
For broader applicability, we model the problem kinematically to provide velocity commands,
similar to other methods such as DWA \cite{DWA}, Kinetostatic Danger Field \cite{KDF}, or APF \cite{APFarm}.
As the RDCBF planner ensures system safety based on the kinematic model,
it can seamlessly integrate into almost any global planner, including GPMP, RRT, Reinforcement Learning, 
or even a simple trajectory tracking controller.
There are two mild conditions for our methods:
\begin{itemize}
\item The positions and safe distances of obstacles are assumed to be known or measurable,
and the measurement errors of obstacle velocities are bounded.
\item A nominal controller (global planner) exists to reach the goal.
\end{itemize}

\indent 
In this paper, we applied the RDCBF trajectory planner to an 8-DoF mobile manipulator.
We designed two scenarios to test the obstacle avoidance capabilities,
trajectory qualities, and the system safety under disturbance and inaccurate obstacle information.
Additionally, there are constraints regarding the environment boundaries and the avoidance of self-collision.
These constraints are not detailed further in this section.

Commonly used methods in local planning of mobile manipulator, 
such as APF \cite{APF}, MPC \cite{MPC3}, and CBF-based methods proposed in this paper, 
will be evaluated in both scenarios. 
Notably, \textbf{Algorithm \ref{region012solution}} and \textbf{Algorithm \ref{reclinesolution}} 
are also employed in MPC-based methods, serving as the distance algorithms for obstacle constraints.

\begin{itemize}
\item Scenario A: Shown in Fig.~\ref{twoscenarios:sub1},
this scenario features two spherical and two cylindrical obstacles with adjustable velocities.
More than 30 safety constraints need to be considered. 
The red circle on the floor represents the goal.
The mobile manipulator starts from coordinates (0,0) and moves towards the goal at coordinates (0,2).
Simultaneously, obstacles approach the mobile manipulator.

\item Scenario B: Shown in Fig.~\ref{twoscenarios:sub2},
Involving 22 obstacles, including flat and elongated cuboidal types,
the mobile manipulator navigates to green circle goal A (-1,0)
to grasp an object. It then maneuvers through a cluttered,
dynamic environment to reach red circle goal B (0,6).
More than 150 safety constraints need to be considered.
External disturbances are introduced to the manipulator,
and inaccurate obstacle information is intentionally provided to the planner.
Additionally, an ablation study will be conducted to assess the effectiveness of RDCBF.
\end{itemize}
    
\begin{figure}
    \centering
    \subfigure[Scenario A]{
    \includegraphics[width=0.225\textwidth]{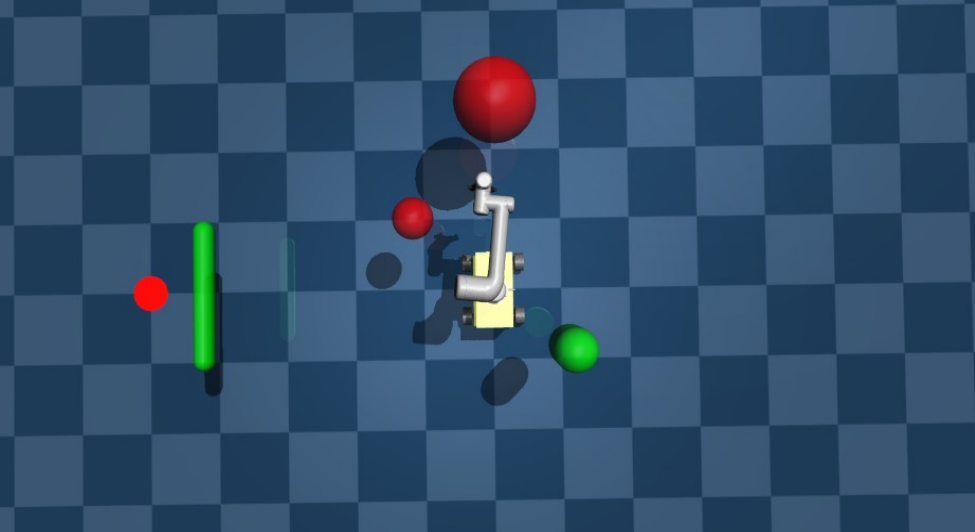}
    \label{twoscenarios:sub1}
    }
    \subfigure[Scenario B]{
    \includegraphics[width=0.225\textwidth]{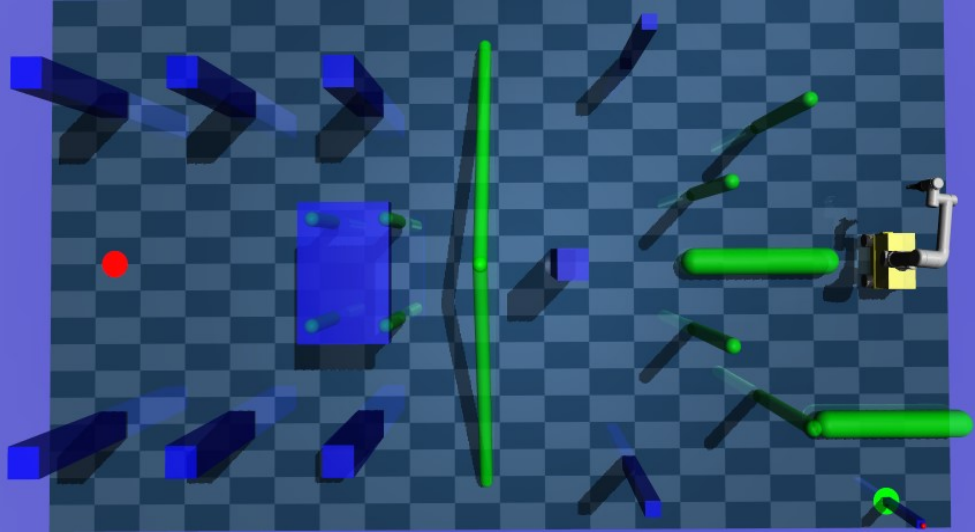}
    \label{twoscenarios:sub2}
    }
    \caption{Two experiment scenarios in Mujoco}
    \label{twoscenarios}
    \end{figure}
    
\indent All methods are implemented in Python, utilizing CasADi \cite{CasADi} 
as the optimization tool and Mujoco \cite{MuJoCo} as the physics engine.
\subsection{Experiment Results}
\subsubsection{Scenario A}
The experiment results are presented in Table \ref{tabSA}. 
The table provides the maximum obstacle velocity 
that the methods can handle without collisions (Max obs. V).
To compare the trajectories quality, it also presents key metrics including 
the planning frequency (Frequency), total time (Time), 
and trajectory length (Length) under the same condition 
where the maximum velocity of obstacles is set to 0.4m/s.
\begin{figure}
  \centering
  \includegraphics[width=0.78\hsize,trim=1.3cm 0.42cm 2cm 1.5cm, clip]{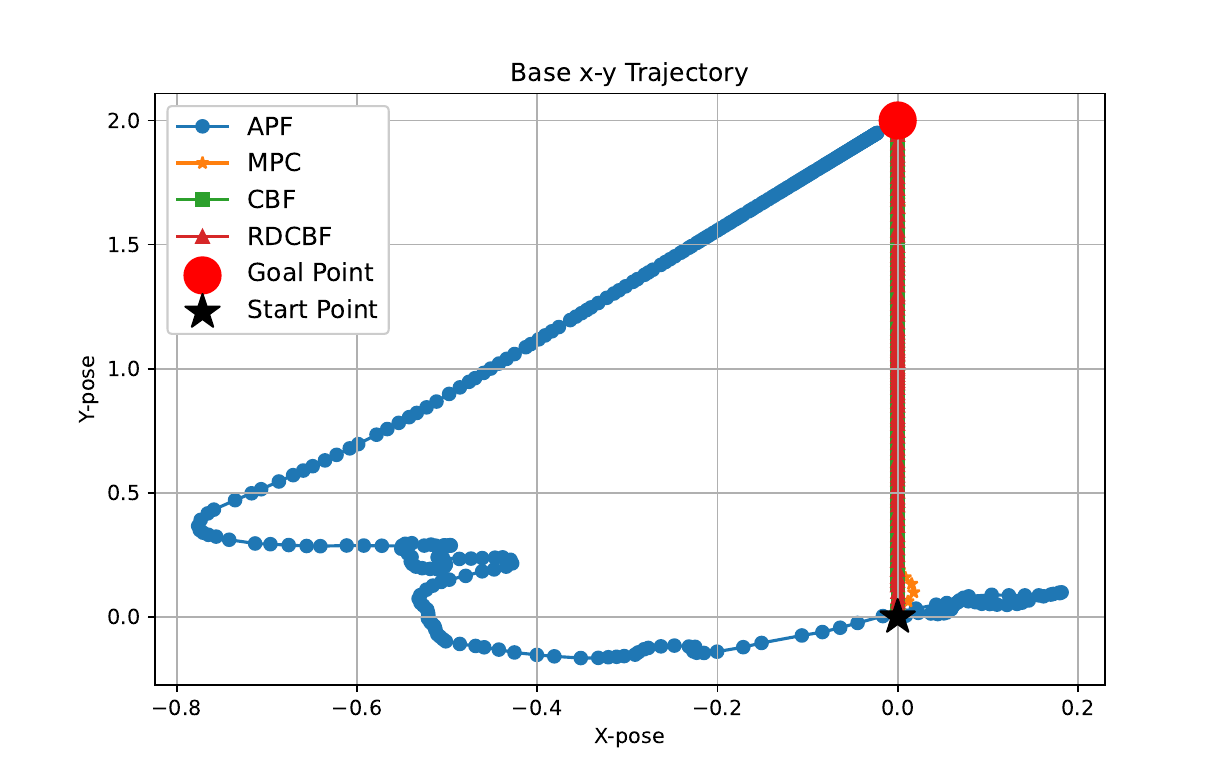}
  \caption{The paths of different methods.}
  \label{figSB2}
  \end{figure}
  \begin{figure}
    \centering
    
    \subfigure[APF method]{
    \includegraphics[width=0.22\textwidth]{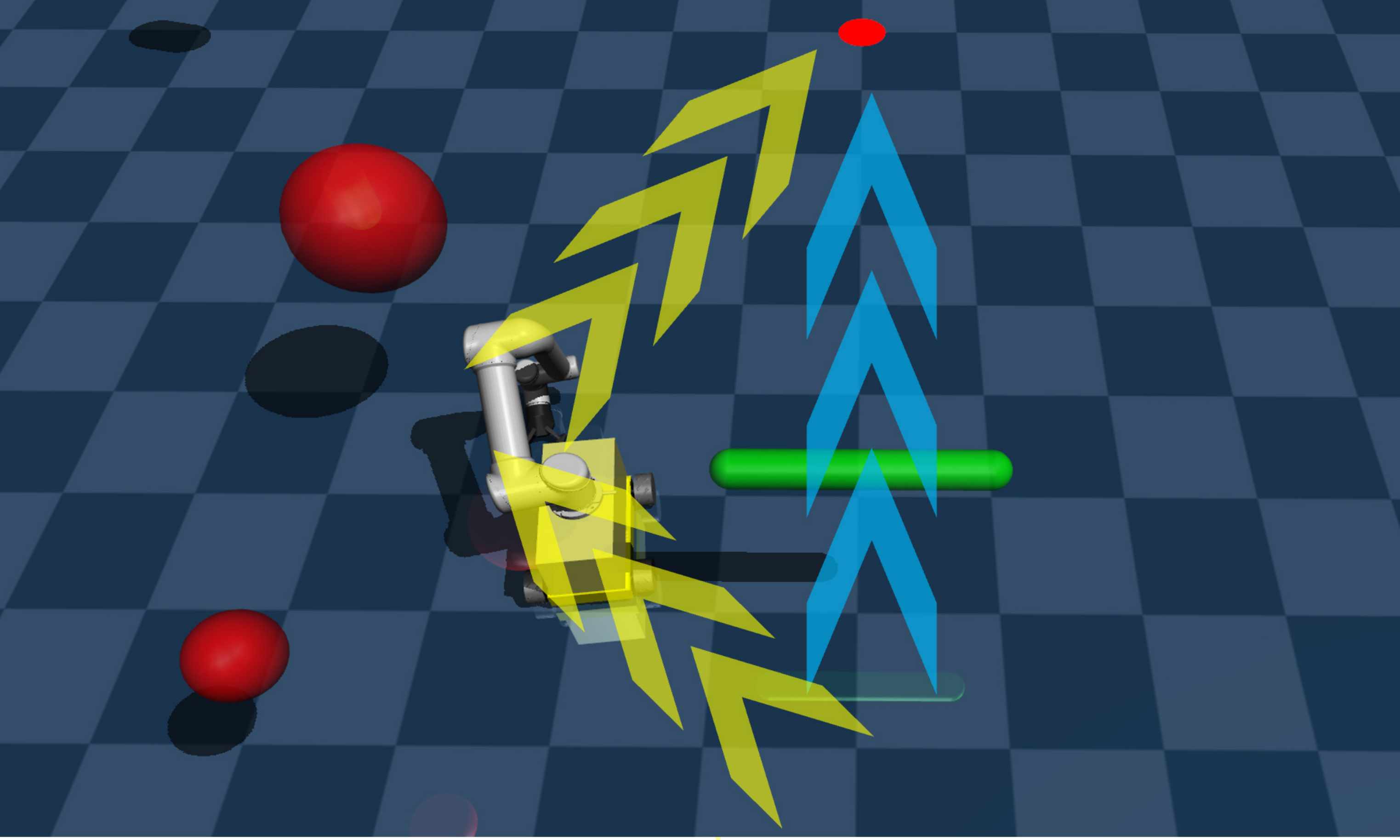}
    \label{SA1}
    }
    \subfigure[CBF- and MPC- based methods.]{
    \includegraphics[width=0.22\textwidth]{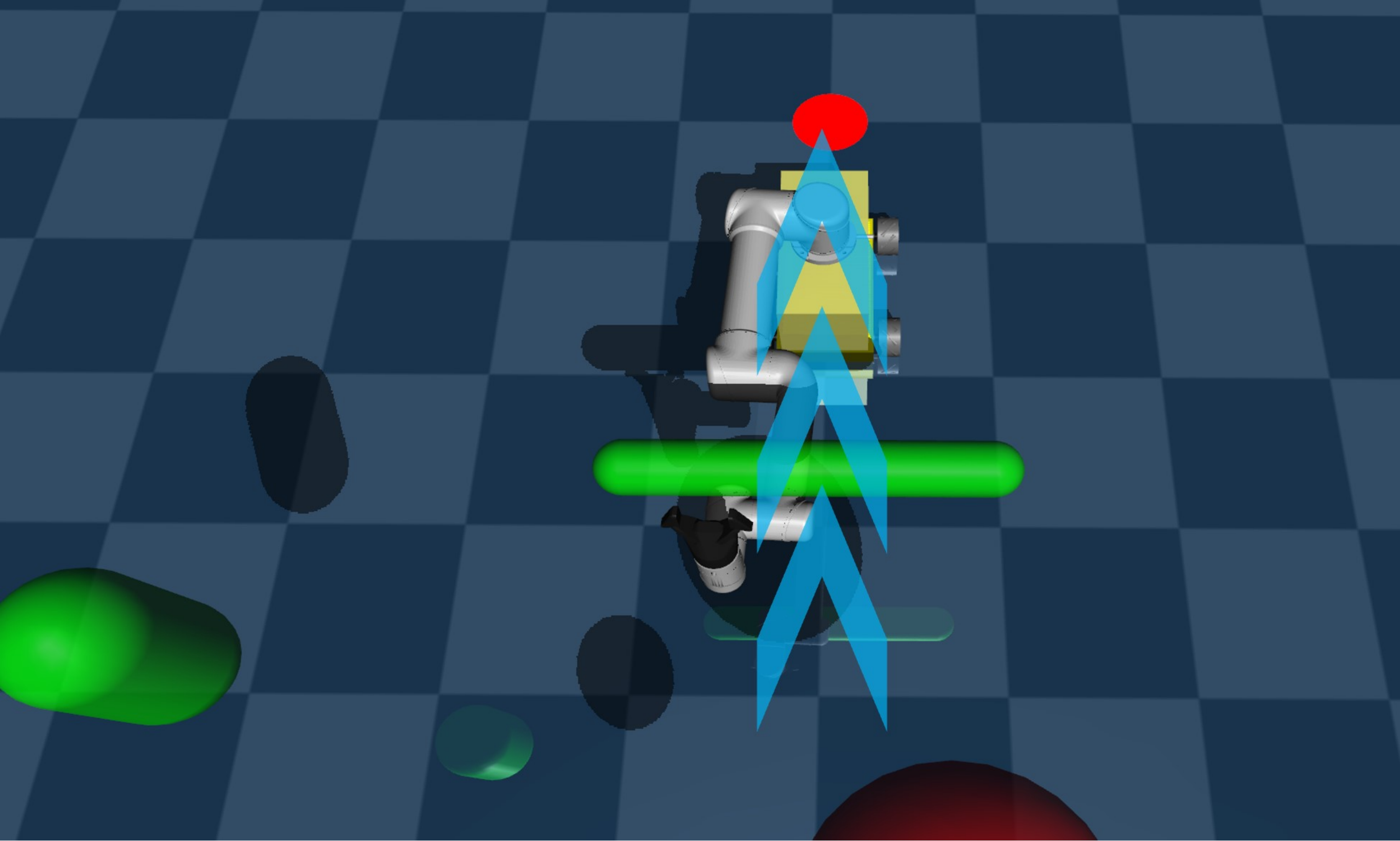}
    \label{SA2}
    }
    \caption{Planning Srategy of different methods. 
    The blue arrow represents the optimal trajectory, while
    the yellow arrow represents a detour.}
    \label{SA}
    \end{figure}

\begin{table}
    \centering
    \caption{Simulation Results of Scenario A}
    \label{tabSA}
    \begin{tabular}{ccccc}
        \toprule
         Methods       & Max obs. V& Frequency  & Time& Length\\
        \midrule
        APF        & 0.560 m/s & \textbf{335.57 Hz} & 7.5794 s&  3.895 m\\
        MPC(N=2)   & 0.500 m/s & 45.581 Hz          & 2.3574 s&  2.016 m\\
        MPC(N=3)   & 0.200 m/s & 16.611 Hz          & failed &  failed \\
        MPC(N=4)   & 0.046 m/s & 5.9789 Hz       & failed &  failed \\
        CBF        & 0.540 m/s & 261.92 Hz          &  2.2486 s&  2.009 m\\
        RDCBF       & \textbf{0.720 m/s} & 189.76 Hz & \textbf{2.2376 s}&  \textbf{2.008 m}\\
        \bottomrule
    \end{tabular}
\end{table}

\begin{figure*}
  \centering
  \subfigure[$t = 0$s: initial state.]{
    \includegraphics[width=.28\linewidth]{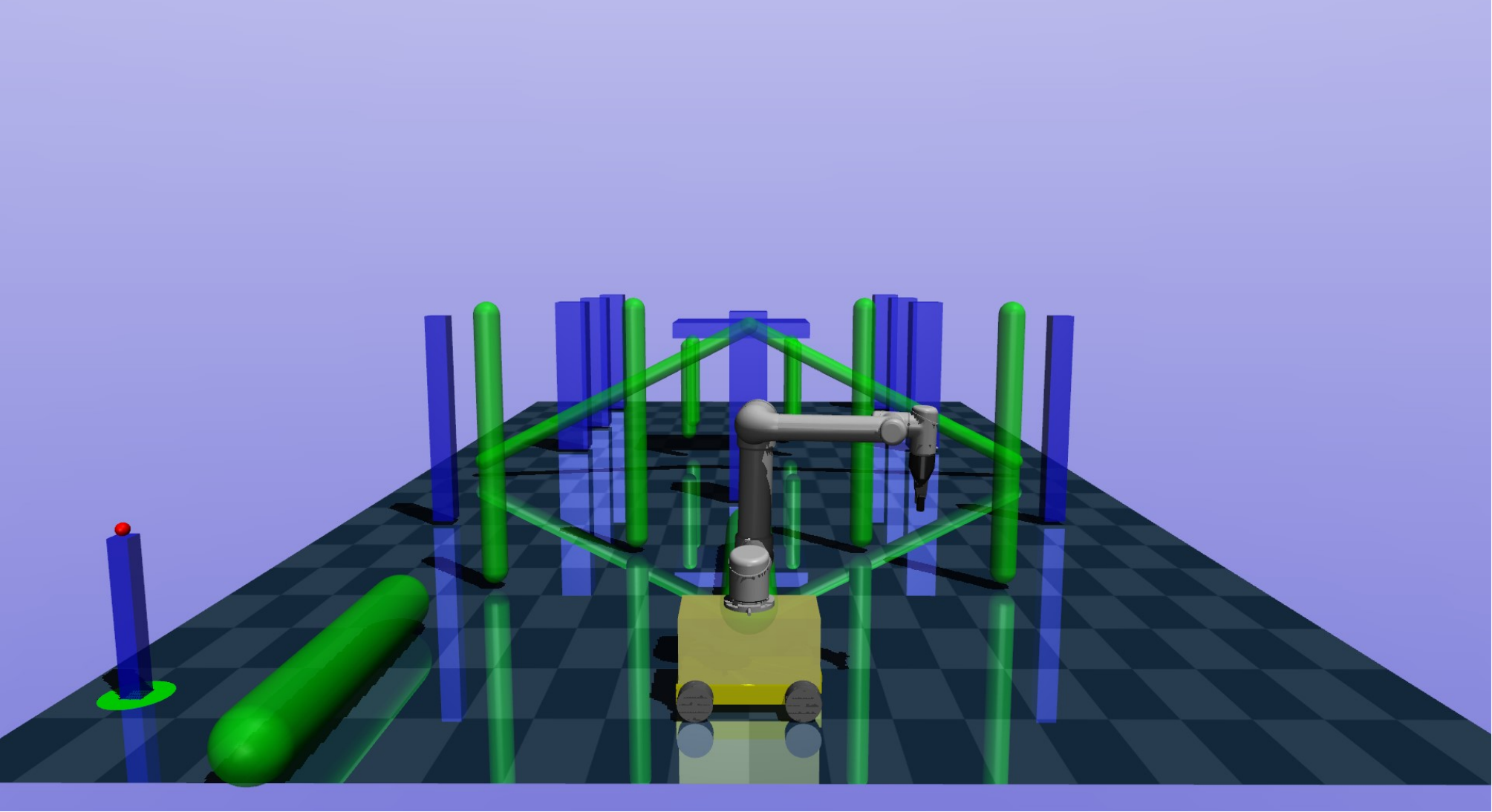}

    \label{fig:sub1}
  }
  \hfill
  \subfigure[$t = 8.5$s: grasps an object. ]{
    \includegraphics[width=.28\linewidth]{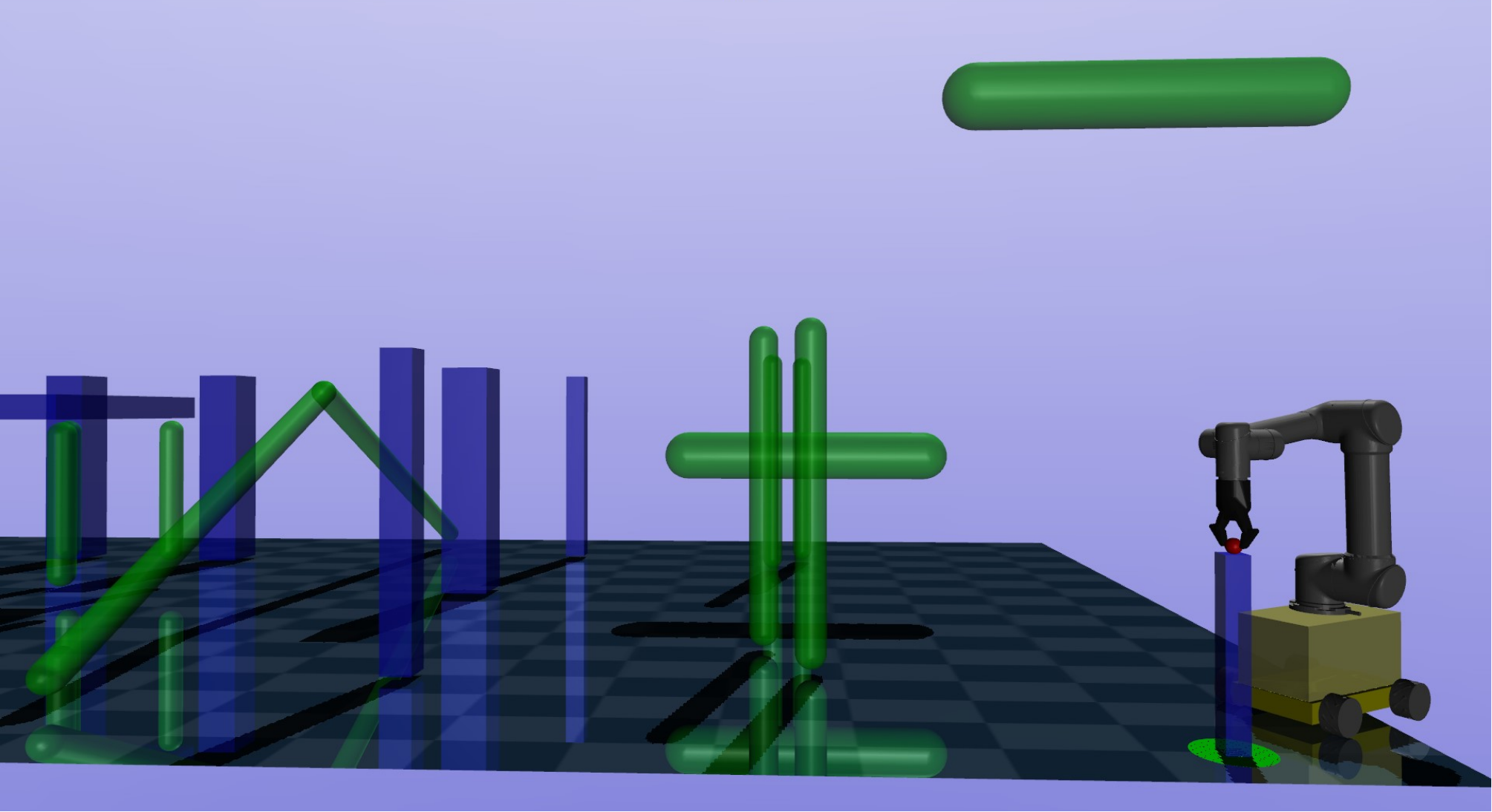}

    \label{fig:sub2}
  }
  \hfill
  \subfigure[$t = 11.5$s: navigates through numerous obstacles.]{
    \includegraphics[width=.28\linewidth]{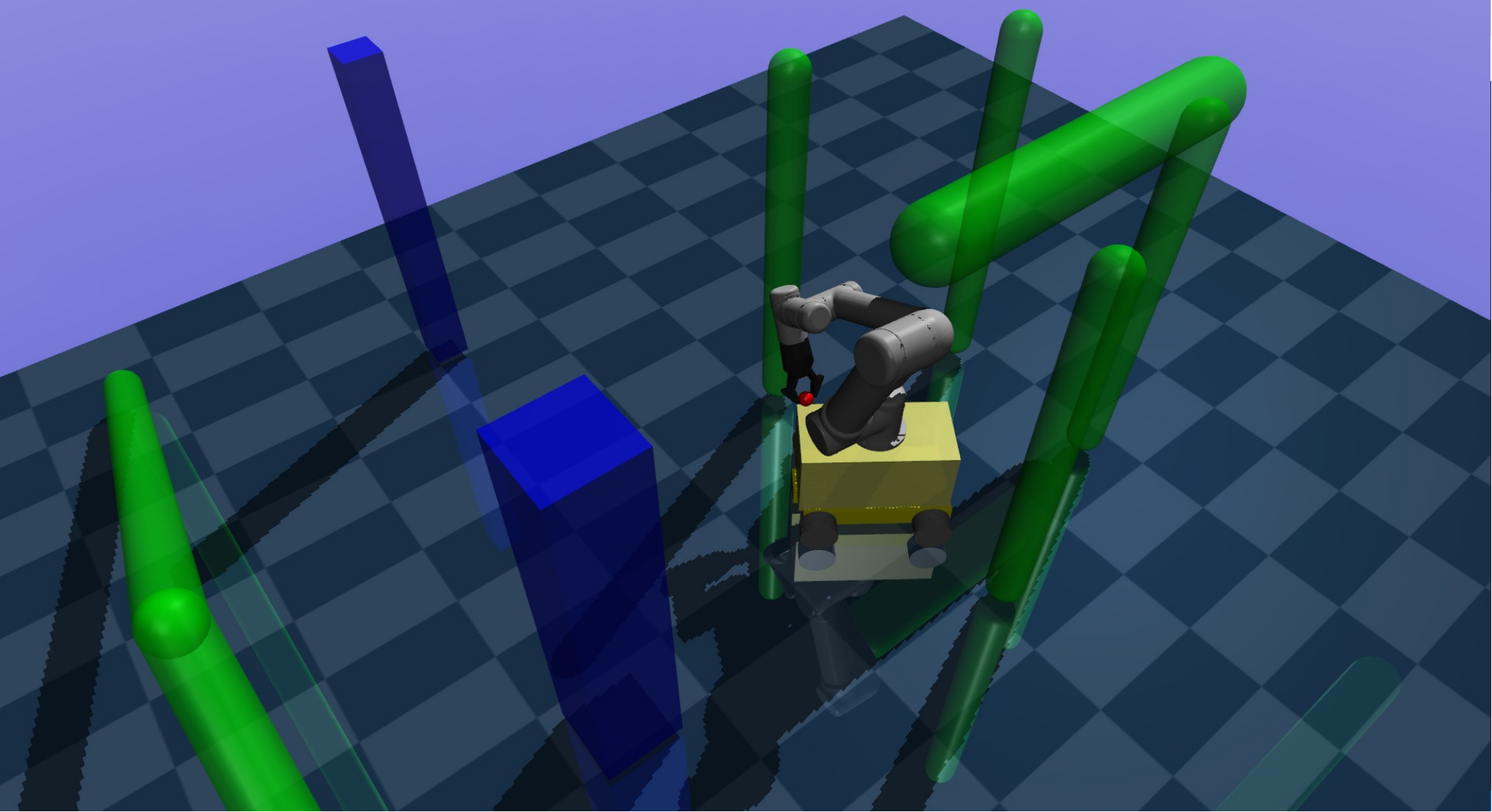}

    \label{fig:sub3}
  }
  
  \subfigure[$t = 14$s: navigates through a stationary archway in the shape of a triangle.]{
    \includegraphics[width=.28\linewidth]{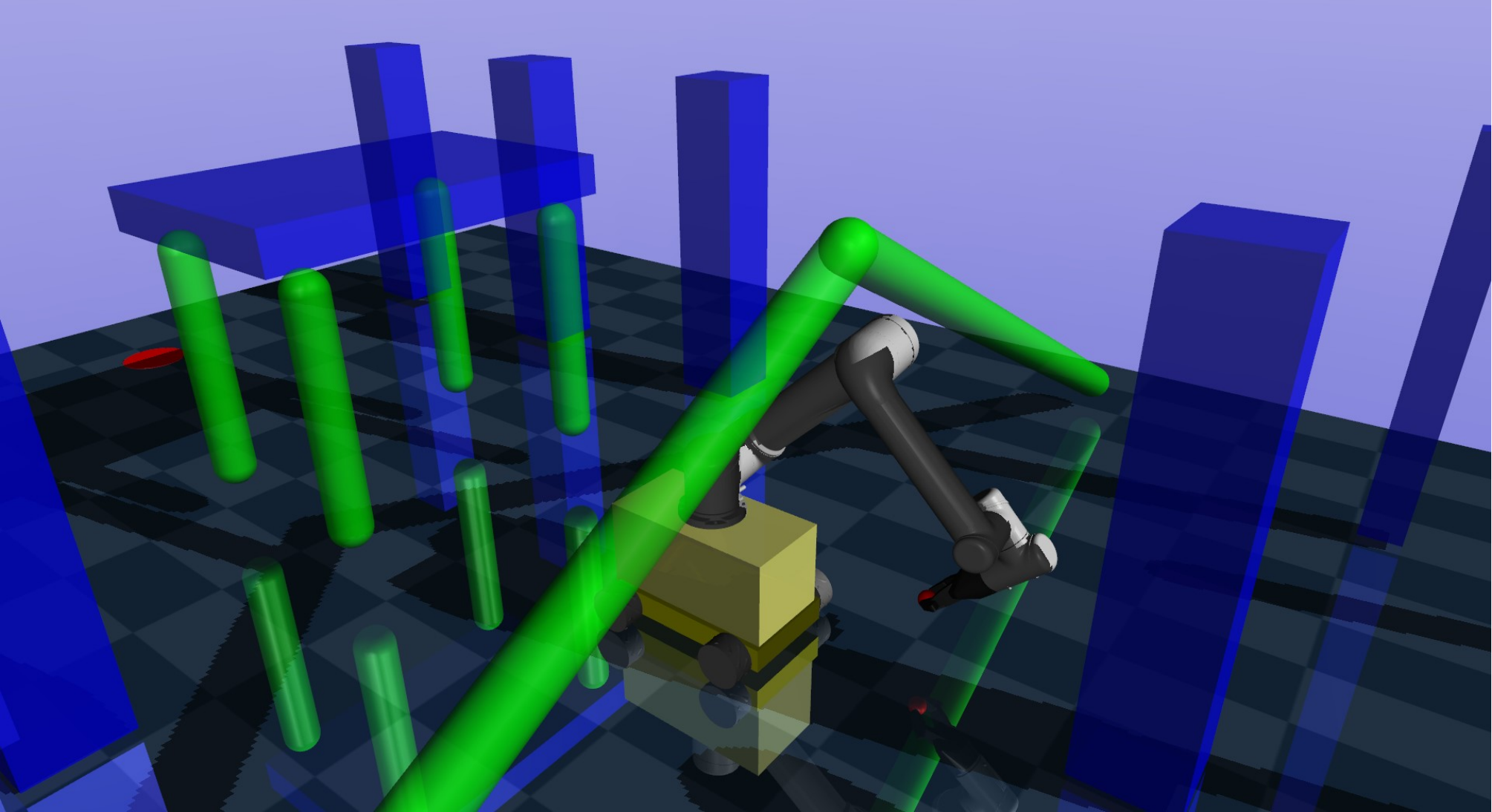}

    \label{fig:sub4}
  }
  \hfill
  \subfigure[$t = 16$s: navigates through a set of composite obstacles resembling a table.]{
    \includegraphics[width=.28\linewidth]{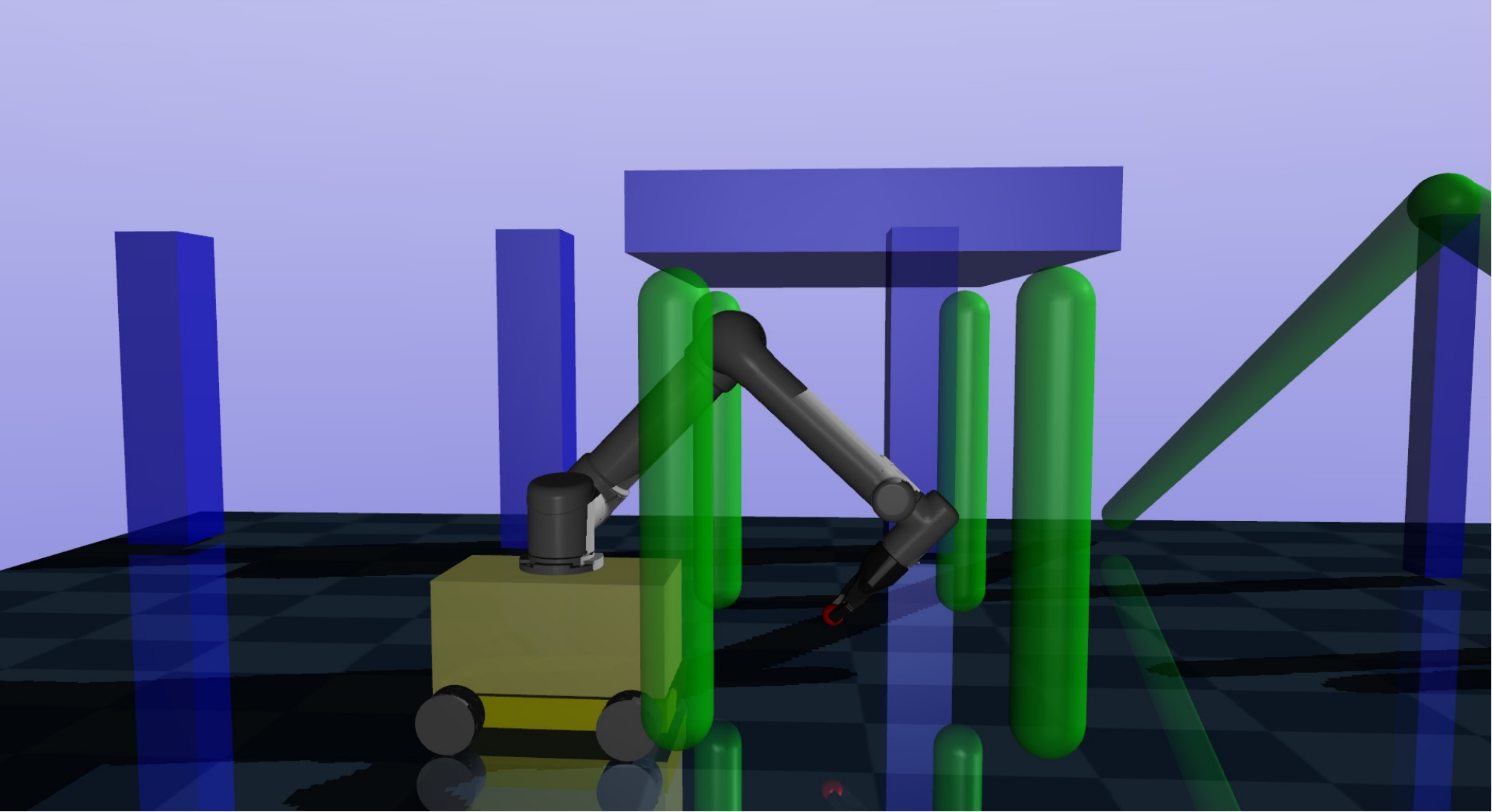}

    \label{fig:sub5}
  }
  \hfill
  \subfigure[$t = 18.5$s: final state.]{
    \includegraphics[width=.28\linewidth]{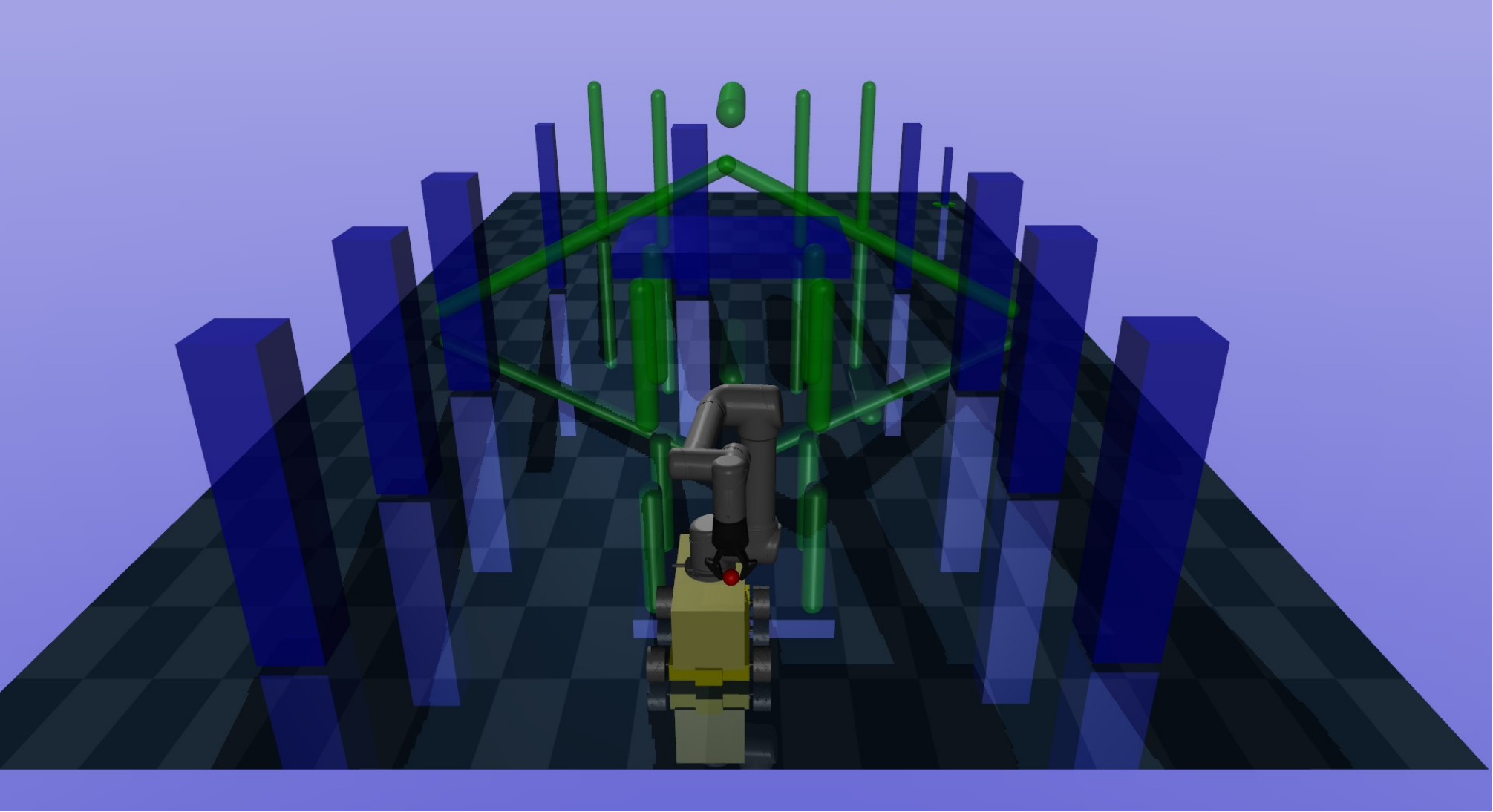}

    \label{fig:sub6}
  }
  
  \caption{Snapshots of mobile manipulator utilizing RDCBF planner.}
  \label{snapshots}
\end{figure*}
\indent The APF demonstrates the shortest computation time,
as it directly calculates the potential field function.
CBF-based methods require solving a QP problem and calculating more complex distance functions,
resulting in lower planning frequencies compared to APF.
However, since all constraints are linear constraints on $\bm{u}$,
the time cost of solving the QP problem is acceptable.
Additionally, MPC-based methods entail solving an N-step nonlinear optimization problem,
with time consumption increasing significantly as N grows.
To accommodate dynamic obstacles, MPC must treat obstacle information as an extension of the system state.
Consequently, MPC-based method are the slowest among all methods.
The maximum velocity that these methods can handle is primarily influenced by
both planning frequency and their ability to incorporate dynamic information into calculations.
APF and CBF struggle to incorporate dynamic information, while MPC (N=2) suffers from slow computation speeds,
resulting in suboptimal performance for both approaches.
MPC (N=3 and N=4) failed when the maximum obstacle velocity was set to 0.4 m/s,
attributed to the unacceptable planning frequency.
In contrast, RDCBF leverages the advantages of DCBF to effectively handle dynamic information
while maintaining an acceptable planning frequency.
As a result, RDCBF emerges as the most effective method capable of handling the fastest obstacles among all.

\begin{figure}
  \centering
  \includegraphics[width=0.9\linewidth,trim=3cm 1cm 3cm 2.8cm, clip]{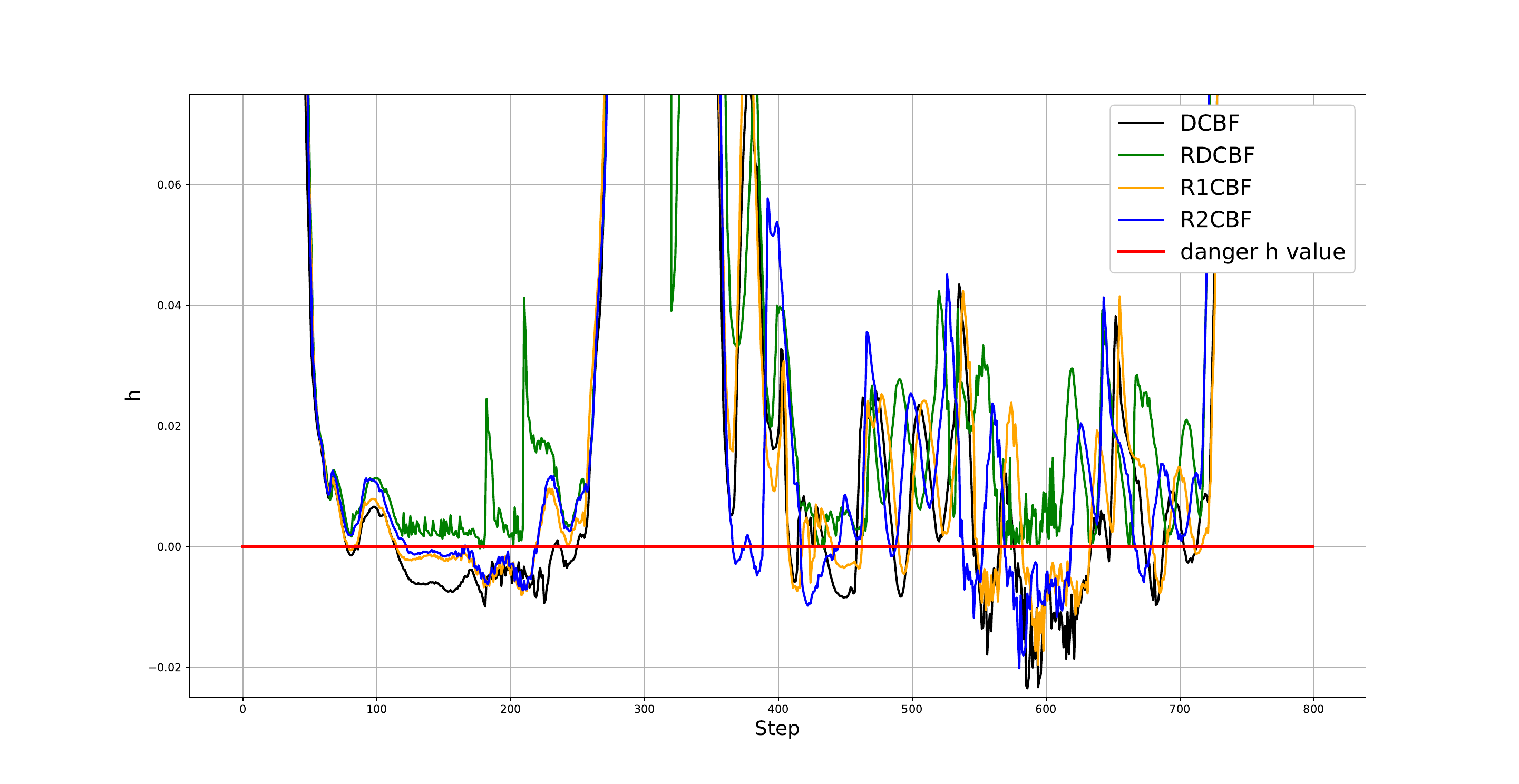}
  \caption{Safety function values of different methods.}
  \label{hvalue}
  \end{figure}

\indent The trajectory length and total time are additional crucial metrics considered by the planning methods.
The base paths of mobile manipulator are showned in Fig.~\ref{figSB2}. 
The APF method necessitates enclosing non-spherical obstacles with multiple spheres.
As illustrated in Fig.~\ref{SA1},
this leads to reduced free space and adopts an undesired strategy of stepping back and circumventing all obstacles.
The drawback of APF makes it more prone to falling into local optima, 
resulting in a longer trajectory length and total time.
In contrast, we observe that both CBF-based and MPC-based methods elegantly navigate through
the spaces between moving obstacles, as illustrated in Fig.~\ref{SA2}.
The paths of the manipulator base generated by these methods almost overlap,
forming straight lines connecting the start point and the goal.
Furthermore, the total time taken by these methods is also similar.
This similarity arises because all these methods utilize the same 3D space distance computing algorithm
introduced in our paper. This observation validates the effectiveness of the algorithm 
in reducing the occupation of free space.

\subsubsection{Scenario B}

In Scenario B, a trigonometric disturbance is applied to (\ref{systemforplanning}),
and the velocity of obstacles is inaccurate, 
with only 60\% of the real values. The maximum amplitude of the disturbance 
are $[0.15,0.20]$ m/s for base velocities, and $[0.5,0.5,0.15,0.35,0.25,0.45]$ rad/s for joint angles.

\indent 
All methods were tested 50 times in this scenario, 
but the APF method failed due to the enclosing of non-spherical obstacles with multiple spheres,
which blocked the trajectory to the target. Similarly, 
MPC also failed because of the excessive constraints and expanded state quantities, preventing real-time solving. 

\indent 
A series of ablation experiments were conducted based on RDCBF,
involving the elimination of different compensation terms:
only the compensation term for state disturbance (denoted as R1CBF),
only the compensation term for velocity error (denoted as R2CBF),
and finally, eliminating both compensation terms (denoted as DCBF).
Subsequently, these four methods were tested in Scenario B.
Experiment snapshots of the RDCBF algorithm are displayed in Fig. \ref{snapshots},
and Table \ref{tabSB} presents the planning frequency and success rate of all algorithms.

\begin{table}
    \centering
    \caption{Simulation Results of Scenario B}
    \label{tabSB}
    \begin{tabular}{ccc}
        \toprule
        Methods       &  Frequency  & Success Rate \\
        \midrule
        APF        & 66.85 Hz & failed\\
        MPC(N=2)   & 1.883 Hz & failed \\
        DCBF   & \textbf{69.39 Hz}& 26 \% \\
        R1CBF   & 56.40 Hz & 68 \% \\
        R2CBF        & 40.52 Hz & 76 \%\\
        RDCBF       & 35.12 Hz & \textbf{96} \%\\
        \bottomrule
    \end{tabular}
\end{table}

\indent 
The inclusion of two compensation terms approximately doubled the computation time,
yet the computational rate remains acceptable.
Activating only the dangerous constraints in the QP problem significantly accelerated the planning frequency.
In ideal conditions (without state disturbance and estimation uncertainty),
the planning frequency can even surpass that of APF.

\indent DCBF, lacking compensation for external disturbance and errors in obstacle information estimation,
exhibits the lowest success rate. Conversely, R1CBF and R2CBF, each eliminating one compensation term,
show relatively higher success rates compared to DCBF.
Ultimately, RDCBF achieves a success rate close to 100\%
by compensating for both disturbance and errors in obstacle information estimation.
The safety function values for CBF-based methods in 
this scenario are displayed in Fig. \ref{hvalue},
representing the minimum value among all safety constraints for the robot at the same time.
It is evident that RDCBF ensures safety in this complex dynamic environment under disturbance
and measurement uncertainty. In contrast, other ablation methods or traditional methods violate safety constraints.
This ablation experiment demonstrates the correctness and superior real-time adaptability
of the RDCBF algorithm in complex dynamic environments.
Since the dynamic trajectory planning task is closely related 
to the actutators limitations of the mobile manipulator and the speed of obstacles,
RDCBF may also fail in scenarios with ``no way out" or
facing excessively fast obstacles.

\section{Conclusion}
This paper proposes a novel path planning algorithm based on RDCBF,
enabling dynamic trajectory planning for high-dimensional mobile manipulators in complex dynamic environments.
The algorithm ensures real-time computation even under constraints involving up to 22 obstacles.
Notably, RDCBF can handle both system state disturbance and inaccurate environmental information,
a feature rarely seen in traditional path planning algorithms.
The algorithm presented in this paper focuses on velocity-level planning,
corresponding to the one relative order RDCBF algorithm.
Future work could extend to higher-order RDCBF algorithms to achieve a
robust acceleration planning framework in high-dimensional spaces,
considering both self-state and environmental information.
We also plan to apply this algorithm to real hardware in the future,
which will require integration with environmental perception technologies.

\end{document}